\newcommand{\cmark}{\ding{51}}%
\newcommand{\xmark}{\ding{55}}%
\newcommand{\eps}{\varepsilon}
\newcommand{\methodname}{\textsc{ADAgIO}\xspace}
\newcommand{\beql}[1]{\begin{equation}\label{#1}}
\newcommand{\eeq}{\end{equation}}
\newtheorem{proof}{\smallskip\noindent\textit{\textbf{Proof}}\quad}
\newcommand{\field}[1]{\mathbb{#1}} 
\newcommand{\spara}[1]{\smallskip\noindent{\bf #1}}
\newtheorem{theorem}{Theorem}
\newtheorem{lemma}{Lemma}
\newcommand{\squishlist}{
 \begin{list}{$\bullet$}
  {  \setlength{\itemsep}{0pt}
     \setlength{\parsep}{3pt}
     \setlength{\topsep}{3pt}
     \setlength{\partopsep}{0pt}
     \setlength{\leftmargin}{2em}
     \setlength{\labelwidth}{1.5em}
     \setlength{\labelsep}{0.5em}
} }
\newcommand{\squishlisttight}{
 \begin{list}{$\bullet$}
  { \setlength{\itemsep}{0pt}
    \setlength{\parsep}{0pt}
    \setlength{\topsep}{0pt}
    \setlength{\partopsep}{0pt}
    \setlength{\leftmargin}{2em}
    \setlength{\labelwidth}{1.5em}
    \setlength{\labelsep}{0.5em}
} }
\newcommand{\squishdesc}{
 \begin{list}{}
  {  \setlength{\itemsep}{0pt}
     \setlength{\parsep}{3pt}
     \setlength{\topsep}{3pt}
     \setlength{\partopsep}{0pt}
     \setlength{\leftmargin}{1em}
     \setlength{\labelwidth}{1.5em}
     \setlength{\labelsep}{0.5em}
} }
\newcommand{\squishend}{
  \end{list}
}
\newcommand{\squishlistt}{
 \begin{list}{---}
  {  \setlength{\itemsep}{0pt}
     \setlength{\parsep}{3pt}
     \setlength{\topsep}{3pt}
     \setlength{\partopsep}{0pt}
     \setlength{\leftmargin}{2em}
     \setlength{\labelwidth}{1.5em}
     \setlength{\labelsep}{0.5em}
} }
\begin{document}
 
\title{ADAGIO: Fast Data-aware Near-Isometric Linear Embeddings}

\author{Jaros{\l}aw B{\l}asiok\thanks{Harvard University,  \texttt{jblasiok@g.harvard.edu}. Supported by NSF grant IIS-1447471.}
\and
Charalampos E. Tsourakakis\thanks{Boston University, Harvard University, \texttt{babis@seas.harvard.edu}. }
}
 
\maketitle

\begin{abstract}

Many important applications, including signal reconstruction, parameter estimation,  and signal processing in a compressed domain, rely on a low-dimensional representation of the dataset  that preserves {\em all} pairwise distances between the data points and leverages the inherent geometric structure that is typically present. 
Recently Hedge, Sankaranarayanan, Yin and Baraniuk \cite{hedge2015} proposed the first data-aware near-isometric linear embedding which achieves the best of both worlds. However, their method NuMax does not scale to large-scale datasets. 

Our main contribution is a simple, data-aware, near-isometric linear dimensionality reduction method which significantly outperforms a state-of-the-art method \cite{hedge2015} with respect to scalability while achieving high quality  near-isometries.  Furthermore, our method comes with strong worst-case theoretical guarantees that allow us to guarantee the quality of the obtained near-isometry. We verify experimentally the efficiency of our method on numerous real-world datasets, where we find that our method ($<$10 secs) is more than 3\,000$\times$ faster than the state-of-the-art method \cite{hedge2015}  ($>$9 hours) on  medium scale datasets with 60\,000 datapoints in 784 dimensions. Finally, we use our method as a preprocessing step 
to increase the computational efficiency of a classification application and for speeding up approximate nearest neighbor queries. 
\end{abstract}

\section{Introduction}
\label{sec:introduction}
Linear dimensionality reduction techniques lie at the core of high dimensional data analysis, due to their appealing 
geometric interpretations and computational properties. Among such techniques, principal component analysis (PCA) and its variants \cite{candes2011robust,d2007direct,scholkopf1997kernel,tipping1999probabilistic} 
play a key role in analyzing high dimensional datasets.  
However, it is well-known that PCA can significantly distort pairwise distances between sample data points \cite{achlioptas2003database}. This is an important drawback for machine learning and signal processing applications that rely on fairly accurate distances between points, such as reconstruction and parameter estimation \cite{hedge2015}.

By relaxing the notion of isometry into a {\em near-isometry},   the seminal Johnson-Lindenstrauss lemma \cite{johnson1984extensions} shows that a Euclidean space with much smaller number of dimensions  preserves all pairwise distances up to relative error $\epsilon$. One simply needs to project the original space onto a random subspace and scale accordingly to obtain a low-distortion embedding with high probability. Formally, the JL-lemma states the following.

\begin{theorem}[JL lemma {\cite[Lemma 1]{johnson1984extensions}}]
\label{thrm:jllemma}
For any $n$-point subset $X$ of Euclidean space and any $0<\eps<1/2$, there exists a map $f:X\rightarrow\ell_2^m$ with $m = O(\eps^{-2}\log n)$ such that
\begin{equation}
\forall x,y\in X,\ (1-\eps)\|x-y\|_2^2 \le \|f(x)-f(y)\|_2^2 \le (1+\eps)\|x-y\|_2^2 . 
\label{eqn:jl-guarantee}
\end{equation}
\end{theorem}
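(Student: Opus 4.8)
The plan is to produce $f$ by the probabilistic method: exhibit a random \emph{linear} map and show it satisfies \eqref{eqn:jl-guarantee} with positive probability. Since $X$ is finite we may assume it lies in $\mathbb{R}^d$ with $d = \dim(\mathrm{span}\,X)$. Let $A$ be an $m \times d$ matrix with i.i.d.\ $\mathcal{N}(0,1)$ entries and set $f(x) = \tfrac{1}{\sqrt m}Ax$; this is linear, so for a pair $x \neq y$ with unit vector $u = (x-y)/\|x-y\|_2$ we have $\|f(x)-f(y)\|_2^2/\|x-y\|_2^2 = \tfrac1m\|Au\|_2^2$, and it suffices to control this ratio for each of the $\binom{n}{2}$ unit vectors $u$ arising from pairs in $X$. (One could equally project onto a uniformly random $m$-dimensional subspace, as in the original proof, but the Gaussian model makes the computation cleaner.)

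The key observation is that $Au$ has i.i.d.\ $\mathcal{N}(0,1)$ coordinates: writing $a_i$ for the $i$-th row of $A$, each $\langle a_i,u\rangle$ is Gaussian with mean $0$ and variance $\|u\|_2^2 = 1$, and the rows are independent. Hence $\|Au\|_2^2 \sim \chi^2_m$, so $\Mean{\tfrac1m\|Au\|_2^2} = 1$, and what remains is a two-sided concentration estimate
\[
\pr{\left|\tfrac1m\|Au\|_2^2 - 1\right| > \eps} \le 2\exp\!\left(-c\,m\,\eps^2\right)
\]
for an absolute constant $c > 0$ and all $0 < \eps < 1/2$. This is the technical heart of the argument. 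It follows from a standard Chernoff bound: apply Markov to $\exp(t\|Au\|_2^2)$ (resp.\ $\exp(-t\|Au\|_2^2)$), use the moment generating function $\Mean{e^{tZ^2}} = (1-2t)^{-1/2}$ of a single squared Gaussian, optimize over $t$, and Taylor-expand $-\tfrac12\log(1-2t) - t$ to extract the $\eps^2$ rate (this is exactly the Laurent--Massart bound). The restriction $\eps < 1/2$ is what keeps the lower-tail term of the same order as the upper-tail term and keeps the relevant series convergent; this step is where all the real work lies, though it is entirely routine.

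Finally I would union-bound. Choosing $m = \lceil C\eps^{-2}\log n\rceil$ with $C$ a sufficiently large absolute constant (depending only on $c$) makes $2\exp(-cm\eps^2) < 1/\binom{n}{2}$ for all $n$ beyond a tiny constant, and the few remaining small cases are trivial (embed into $\mathbb{R}^{n-1}$ isometrically). Applying the concentration bound to each of the $\binom{n}{2}$ unit vectors $u$ and summing, the probability that \emph{some} pair violates \eqref{eqn:jl-guarantee} is strictly less than $1$; therefore some realization of $A$, hence some linear map $f$, satisfies \eqref{eqn:jl-guarantee} for all pairs simultaneously, with $m = O(\eps^{-2}\log n)$ as required. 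The only genuine obstacle is the Gaussian concentration inequality displayed above; linearity, the $\chi^2$ identification, and the union bound are all immediate.
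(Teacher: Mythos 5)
Your argument is correct, but note that the paper does not prove this statement at all: Theorem~\ref{thrm:jllemma} is imported verbatim from Johnson and Lindenstrauss (the citation is to their Lemma~1), and the only related material in the paper is Lemma~\ref{lem:jl}, the constructive variant with random $\pm 1/\sqrt{k}$ entries quoted from Achlioptas, also without proof. So the comparison is with the cited original rather than with anything in the text. The original proof projects onto a uniformly random $m$-dimensional subspace and invokes concentration of measure on the sphere; your route replaces this with an i.i.d.\ Gaussian matrix, the $\chi^2_m$ identification of $\|Au\|_2^2$ for a unit vector $u$, a Chernoff/Laurent--Massart tail bound, and a union bound over the $\binom{n}{2}$ secant directions. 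This is the now-standard simplification (Indyk--Motwani, Dasgupta--Gupta style): it trades the geometric measure-concentration lemma for an elementary moment-generating-function computation, yields the same $m=O(\eps^{-2}\log n)$ with explicit constants, and is actually closer in spirit to the sign-matrix construction the paper relies on algorithmically, which further discretizes the distribution for practical and storage reasons. The one piece you leave as a citation-level step, the two-sided bound $\pr{\left|\tfrac1m\|Au\|_2^2-1\right|>\eps}\le 2\exp(-cm\eps^2)$, is standard and your sketch of its derivation (Markov on $e^{\pm t\|Au\|_2^2}$, the MGF $(1-2t)^{-1/2}$, optimization in $t$, Taylor expansion) is the correct and complete route, so I see no genuine gap; your handling of the small-$n$ regime via an exact embedding into $\mathbb{R}^{n-1}$ is harmless, though with a large enough constant $C$ the union bound already covers every $n\ge 2$.
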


To quantify the notion of near-isometry, we introduce the notion of distortion: if $f$  is a linear map, the distortion for a pair of points $x,y$ is defined as 

\begin{equation}
\text{distortion}(x,y) =  | \frac{\|f(x)-f(y)\|_2}{\|x-y\|_2} - 1|. 
\label{eqn:distortiondfn}
\end{equation}

\begin{figure*}[t]
    \begin{center}
    \includegraphics[width=0.4\textwidth]{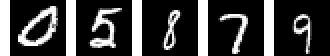}
\end{center}
    \caption{
        Example images from MNIST dataset. Each image consists of $28\times 28$ pixels, and is considered as a point in a 784-dimensional space.
        \label{fig:mnist-datapoints}
    }
\end{figure*}
\begin{figure*}[t]
\centering
\begin{tabular}{@{}c@{}@{\ }c@{}@{\ }c@{}}
 \includegraphics[width=0.23\textwidth]{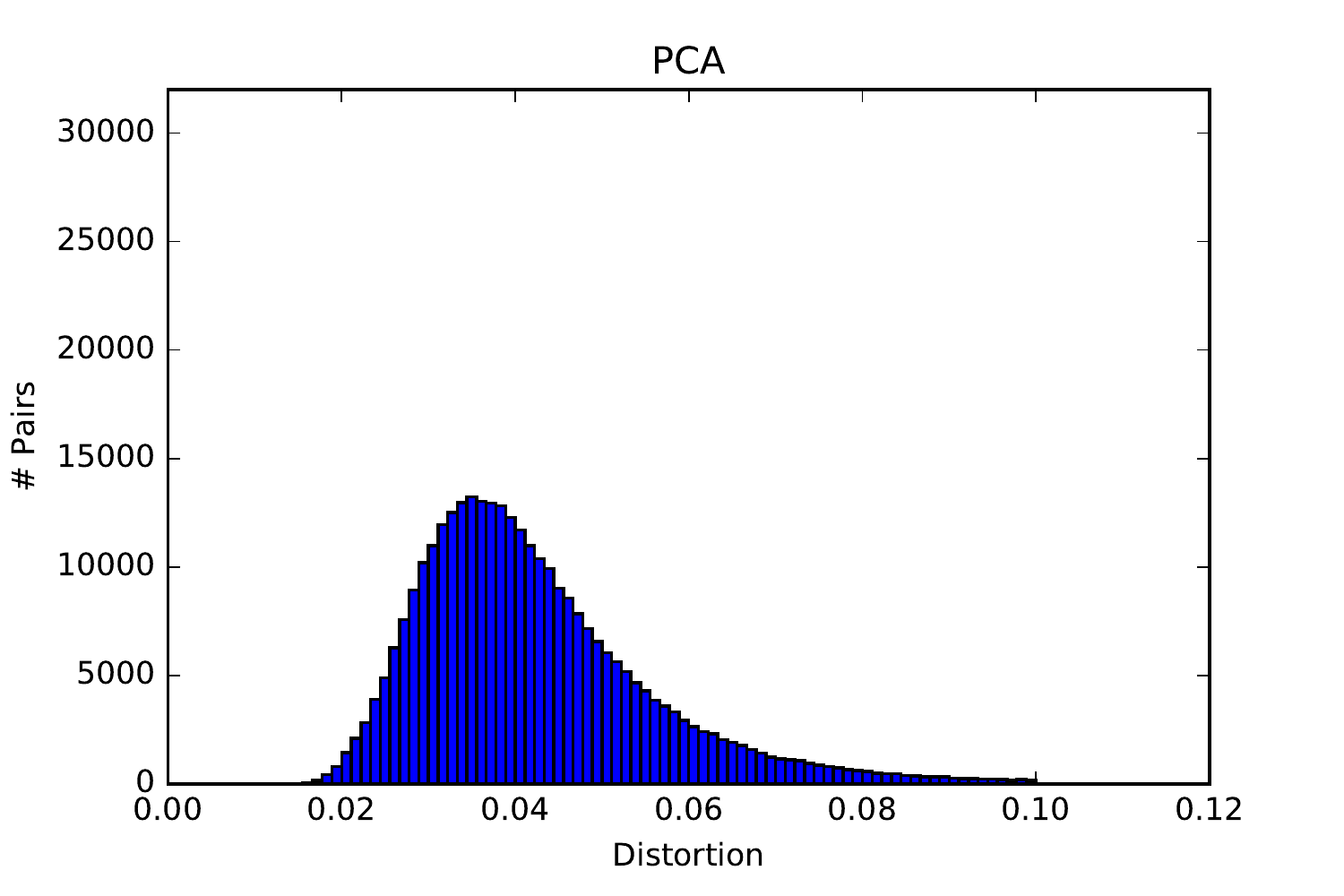}  \hspace{6mm} & \includegraphics[width=0.23\textwidth]{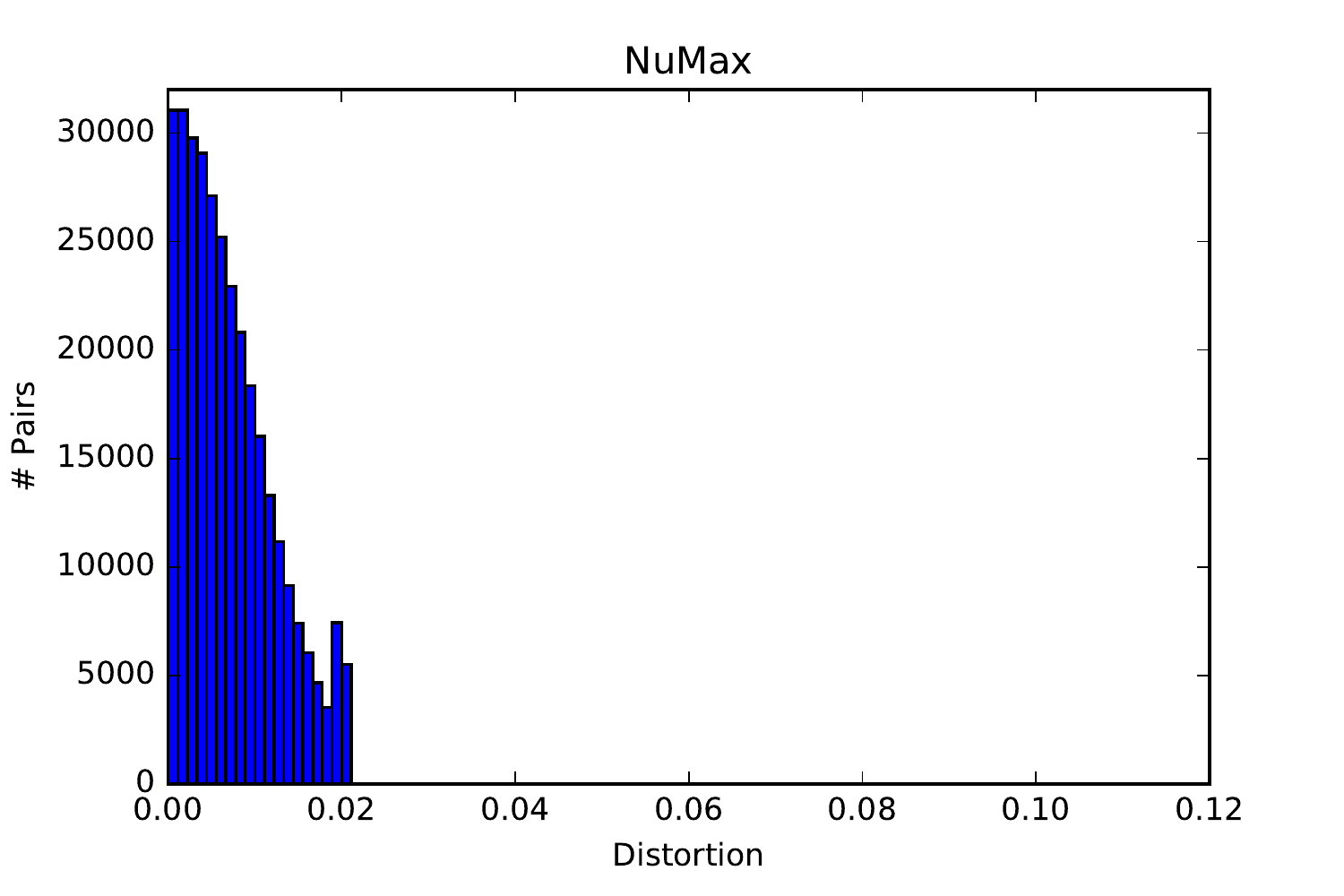} \hspace{6mm}  & \includegraphics[width=0.23\textwidth]{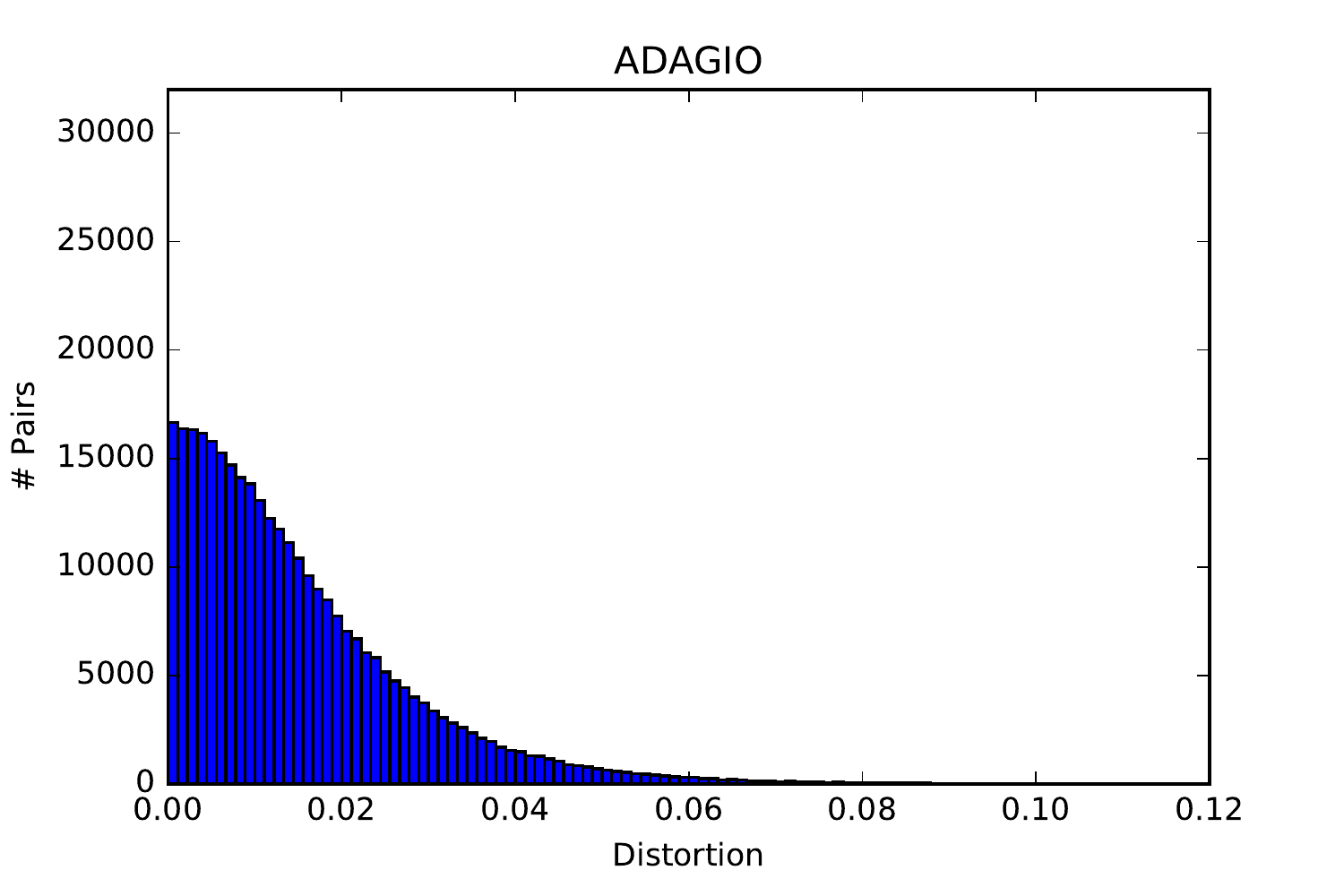} \\
(a) & (b) &(c) 
\end{tabular}
\caption{\label{fig:mnistintro} Distribution of pairwise distortions for (a) PCA, (b) Numax, and (c) \methodname. Respective run times in seconds are  5.2, 1\,105, 5.6 seconds respectively.}
\vspace{-4mm}
\end{figure*}

\noindent The maximum distortion between any pair of points in the cloud is defined as the {\em distortion} of the embedding $f$, and quantifies how close  the embedding is to being a near-isometry.

\begin{equation}
\text{distortion}(f) =  \max_{x,y} | \frac{\|f(x)-f(y)\|_2}{\|x-y\|_2} - 1|. 
\label{eqn:distortionemb}
\end{equation}

\noindent  The JL-lemma equivalently states that there exists a linear embedding to a low dimensional subspace whose distortion is at most $\eps$.  However, random projections are data oblivious and hence do not leverage any special geometric structure of the dataset, which frequently is present. For this reason, recent work has focused on  developing linear dimensionality reduction techniques that are both {\em data-aware} and produce {\em near-isometric} embeddings, cf.  \cite{bah2013energy,	grant2013nearly,hedge2015,kyrillidis2014approximate,luo2016practical}. It is worth mentioning that non-linear techniques have also been developed, see, e.g., \cite{kulis2007fast}.

In their recent work Hedge, Sankaranarayanan, Yin and Baraniuk \cite{hedge2015} provide a non-convex formulation and a convex relaxation. On the one hand their method NuMax is data-aware and outputs high-quality near-isometries, but on the other hand  does not scale well: on a dataset with 1000 data points in $\field{R}^{256}$,  NuMax requires more than half an hour on a usual laptop to perform dimensionality reduction. For even higher dimensional datasets or datasets with more data points, NuMax fails frequently to produce any results in a reasonable amount of time. Furthermore, from a theory perspective it  is an
interesting open problem to prove any kind of theoretical guarantees for NuMax, as at the moment its performance is not well understood.

\spara{Contributions.}  Our contributions are summarized as follows.

\squishlist
\item We propose a novel randomized method for constructing efficiently linear, data-aware, near-isometric embeddings of  a high-dimensional dataset.  Our method combines the best of the PCA-world (data-awareness) and the JL-world (near-isometry) in an intuitive way:  combine PCA and random projections appropriately.   Also, our method comes with strong theoretical guarantees in contrast to prior work.
\smallskip 
\item We verify the effectiveness of our proposed method on numerous real-world high-dimensional datasets. We observe that our method outperforms the state-of-the-art method due to Hedge et al. \cite{hedge2015} significantly in terms of run times, while achieving high-quality near-isometries. Also, our method in constrast to NuMax comes is amenable to distributed implementations.
\smallskip 
\item We use our method as a preprocessing step for an approximate nearest neighbor (ANN) application. Our findings indicate that our method can be used to improve the efficiency of ANN algorithms while achieving high accuracy.  Our findings are similar for a classification application using k-nearest neighbor graphs. We observe that \methodname achieves high accuracy and significant speedups  compared to NuMax \cite{hedge2015}, ranging from few hundred times up to hundred thousand times (specifically  424\,780$\times$, \methodname runs in 0.01 seconds while NuMax requires 4\,247.8 seconds).
\squishend

\smallskip

Figure~\ref{fig:mnistintro} offers a quick illustration of our contribution on the digit MNIST dataset. For the exact details of the experimental setup, confer Section~\ref{sec:exp}. Figure~\ref{fig:mnist-datapoints} shows few data points: each data point is an image of a handwritten digit with 28$\times$28 pixels. Each image is converted to a data point in 784 dimensions. The dataset is publicly available \cite{mnist}. Figure~\ref{fig:mnistintro}(a),(b),and (c) show the distribution of pairwise distortions for PCA, Numax and our proposed method \methodname respectively. For the dataset specifics and the exact experimental setup see Section~\ref{sec:exp}. Note  that zero distortion corresponds to a  perfect isometry. We observe that both NuMax and \methodname improve significantly PCA's performance with respect to achieving a near-isometry. Also, both NuMax and \methodname leverage the geometric data structure.   However, in terms of runtimes, \methodname runs in 5.6 seconds whereas Numax in roughly 18.4 minutes.
  
\spara{Roadmap.}  Section~\ref{sec:related}  reviews briefly  related work. Section~\ref{sec:proposed} presents our mathematical and algorithmic contributions, and Section~\ref{sec:exp} evaluates experimentally our framework. Finally, Section~\ref{sec:concl} concludes the paper.

\section{Related Work}
\label{sec:related}
Linear dimensionality reduction methods are a cornerstone for analyzing high dimensional datasets. Given a cloud $\mathcal{T}$ of $n$ $d$-dimensional points $X^T = [x_1|\ldots|x_n]$ ($X \in \field{R}^{n \times d}$), where $x_i \in \mathcal{T}$, and a target dimension $r<d$, the goal is to produce a matrix $\Pi \in \field{R}^{r \times d}$  that transforms {\em linearly} $X$ into $Y = \Pi X \in \field{R}^{r\times n}$ and optimizes some objective function.

\spara{PCA.} Principal Component Analysis (PCA) is a Swiss Army knife for data analysis. It minimizes the reconstruction error under the $l_2$ norm $f_X(M) = || X - MM^TX||_F^2$. It is well known, that once the data is zero-centered (i.e. $\sum\limits_{x \in \mathcal{T}} x = 0$), PCA yields an orthogonal projection matrix that minimizes the following quantity

    \begin{equation}
        \sum_{x,y \in T} \Big( \|x - y\|^2 - \| \Pi x - \Pi y\|^2 \Big)
        \label{eq:avg-distortion}
    \end{equation}

\noindent  over all orthogonal projections of rank $k$. PCA is obtained by computing the singular value decomposition (SVD) of matrix $X = U \Sigma V^T$ and then by projecting the columns of $X$ on the subspace spanned by the $r$  leftmost columns of $U$.   As it can be seen by the Equation~\eqref{eq:avg-distortion}, PCA aims to minimize a quantity related to average distortion over all pairs of points rather than the maximum distortion. In practice PCA is not near-isometric:  typically there exists a non-trivial fraction of pairs of points whose distance can be very distorted.

\spara{Random projections.}  Principal component analysis (PCA) is a classical unsupervised analysis method, and the predominant linear dimensionality reduction technique. While PCA is an important algorithmic tool for analyzing high-dimensional datasets, many important signal processing applications such as reconstruction and parameter estimation  rely on preserving well {\em all} pairwise distances \cite{hedge2015}. Random projections provide a simple way to obtain near-isometric linear embeddings of clouds of points \cite{achlioptas2003database,dasgupta2003elementary,frankl1988johnson,johnson1984extensions,larsen2014johnson},
 see also  \cite{bingham2001random,boufounos2015representation,indyk1998approximate} for some more  practical aspects. Besides the existential result stated as Theorem~\ref{thrm:jllemma}, an efficient construction of the embedding is possible. 
\medskip

 \begin{lemma}[Constructive Johnson-Lindenstrauss \cite{achlioptas2003database}]
        \label{lem:jl}
        For any finite set of points $\mathcal{T} \subset \field{R}^d$, let $\Pi \in \field{R}^{k \times d}$ be a matrix with uniformly and independently chosen random entries  $\pi_{ij} \in \{-\frac{1}{\sqrt{k}}, \frac{1}{\sqrt{k}}\}$. Moreover let $k = \Omega\Big(\frac{1}{\delta^2} \log\big( \frac{|T|}{\gamma}\big) \Big)$. Then with probability at least $1-\gamma$, it holds that
        \begin{equation}
            \forall_{x, y \in \mathcal{T} } (1-\delta)\|x - y\| \leq \| \Pi x - \Pi y\| \leq (1+\delta)\|x - y\|
        \end{equation}
    \end{lemma}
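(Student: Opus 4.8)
The plan is to follow the standard two-step route. First I would establish a one-dimensional concentration (``distributional JL'') statement: for a \emph{fixed} vector $u$, the random variable $\|\Pi u\|^2$ has mean $\|u\|^2$ and deviates from it by more than a $\delta$ fraction only with probability exponentially small in $k\delta^2$. Second, I would apply this to each of the at most $\binom{|\mathcal{T}|}{2}$ difference vectors $x-y$, $x\neq y$, and take a union bound, choosing the hidden constant in $k=\Omega(\delta^{-2}\log(|\mathcal{T}|/\gamma))$ large enough to absorb the resulting $|\mathcal{T}|^2$ factor.

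For the first step, fix $u\in\mathbb{R}^d$; by homogeneity of the inequality in $u$ we may assume $\|u\|=1$. Write $\pi_{ij}=s_{ij}/\sqrt{k}$ with $s_{ij}$ i.i.d.\ Rademacher, and set $Q_i=\sum_{j=1}^d s_{ij}u_j$, so that $\|\Pi u\|^2=\frac1k\sum_{i=1}^k Q_i^2$. Since the $s_{ij}$ are independent and mean zero, the cross terms vanish and $\mathbb{E}[Q_i^2]=\sum_j u_j^2=1$, hence $\mathbb{E}\|\Pi u\|^2=1$. The $Q_i$ are i.i.d., each a Rademacher sum and therefore sub-Gaussian, so $Q_i^2-1$ is centered and sub-exponential; a Bernstein-type estimate --- equivalently, bounding the moment generating function $\mathbb{E}\exp(t Q_i^2)$ for $|t|$ below the critical value, exactly as in Achlioptas \cite{achlioptas2003database} --- yields
\[ \Prob{\Big|\tfrac1k\sum_{i=1}^k Q_i^2-1\Big|\ge \delta}\ \le\ 2\exp(-c\,k\,\delta^2) \]
for an absolute constant $c>0$ and every $\delta\in(0,1)$. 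In practice one treats the two tails separately: the lower tail from $e^{-tQ_i^2}\le 1$ together with a second-order expansion, the upper tail from the sub-exponential MGF bound valid for $t$ bounded away from the blow-up point.

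For the second step, apply the display above with $u=x-y$ for every unordered pair $\{x,y\}\subseteq\mathcal{T}$ with $x\ne y$; there are at most $\binom{|\mathcal{T}|}{2}\le|\mathcal{T}|^2$ of them. Picking $k\ge\frac{C}{\delta^2}\log(|\mathcal{T}|/\gamma)$ with $C$ large relative to $c$ makes $2|\mathcal{T}|^2\exp(-ck\delta^2)\le\gamma$, so by the union bound, with probability at least $1-\gamma$ we simultaneously have $(1-\delta)\|x-y\|^2\le\|\Pi x-\Pi y\|^2\le(1+\delta)\|x-y\|^2$ for all such pairs (the case $x=y$ being trivial). Taking square roots and using $\sqrt{1-\delta}\ge1-\delta$ and $\sqrt{1+\delta}\le1+\delta$ on $[0,1]$ gives exactly the stated two-sided bound on $\|\Pi x-\Pi y\|$.

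I expect the only genuine work to be the sub-exponential tail bound of the first step --- obtaining clean absolute constants for both the upper and the lower deviation of $\frac1k\sum_i Q_i^2$ from its mean $1$. Everything else (the mean computation, the union bound, and passing from squared norms back to norms) is routine bookkeeping.
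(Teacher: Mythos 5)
The paper does not prove this lemma at all --- it is imported verbatim as a known result, with the proof delegated to the cited reference (Achlioptas, 2003) --- so there is no in-paper argument to compare against. Your two-step sketch (fixed-vector concentration for $\|\Pi u\|^2$ via the sub-Gaussian/sub-exponential Bernstein bound, then a union bound over the $\le |\mathcal{T}|^2$ difference vectors and a square-root step) is correct and is essentially the standard proof of the cited result; the only divergence from Achlioptas himself is that he computes the moment generating function of $Q_i^2$ directly to extract explicit constants, whereas you invoke generic sub-exponential machinery, which is perfectly adequate here since the lemma only claims $k=\Omega\big(\delta^{-2}\log(|\mathcal{T}|/\gamma)\big)$ with an unspecified constant.
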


\spara{NuMax \cite{hedge2015}.}   Surprisingly, until recently  little was known in terms of designing  data-aware, near-isometric linear dimensionality reduction techniques. Such methods, like random projections preserve well pairwise distances, and at the same time   leverage the geometric structure --typically inherent-- of the dataset.   In their work Hedge, Sankaranarayanan, Yin and Baraniuk \cite{hedge2015} approached this question --- they stated it as an non-convex optimization problem, and proposed a convex relaxation, which they called NuMAX. Despite the fine-tuned design of their algorithm, it still does not scale to large-scale datasets.

\section{Proposed Method}
\label{sec:proposed}
Our proposed method \methodname (fAst Data Aware IsOmetry) is shown in Algorithm~\ref{alg1}.  The algorithm takes as input a cloud of points $T$ and an orthonormal matrix $P \in \field{R}^{s \times d}$, i.e., $PP^T=I$. We are particularly interested in the setting where $s \ll d$. Note that $P^T P$ is the projection matrix on the subspace spanned by the rows of $P$. It is useful to think as $P^TP$ as the PCA projection matrix,namely $P$ contains the top $s$ left  singular vectors of the data matrix.    In general any other type of projection matrix can be used instead, for instance \cite{AbdullahAKK14,candes2011robust,tipping1999probabilistic}. The algorithm generates a random projection JL-matrix $S$ and maps linear each $w \in T$ to 
$(Pw, S(w - P^TPw))$. This surprisingly simple algorithm provably 
achieves fast, data-aware, and near-isometric dimensionality reduction.

\begin{algorithm}                      
\caption{\methodname}          
\label{alg1}                           
\begin{algorithmic}                    
    \REQUIRE Cloud of points $T \subset \field{R}^d$, orthonormal matrix $P^{s \times d}$ with distortion $\delta$, distortion reduction parameter $\epsilon <1$
     \STATE $k \leftarrow \Theta\big(\frac{1}{\epsilon^2} \log(|T|)\big) $
    \STATE Let $S \in \field{R}^{k \times d}$ be a random matrix with uniformly and independently chosen random entries  $s_{ij} \in \{-\frac{1}{\sqrt{k}}, \frac{1}{\sqrt{k}}\}$ 
    \FOR{each $w \in T$}
     \STATE Output $f(w) \leftarrow (Pw, S(w - P^TPw))$
     \ENDFOR
\end{algorithmic}
\end{algorithm}

\begin{figure*}[!ht]
\centering 
\begin{tabular}{@{}c@{}@{\ }c@{}@{\ }c@{}} \includegraphics[width=0.28\textwidth]{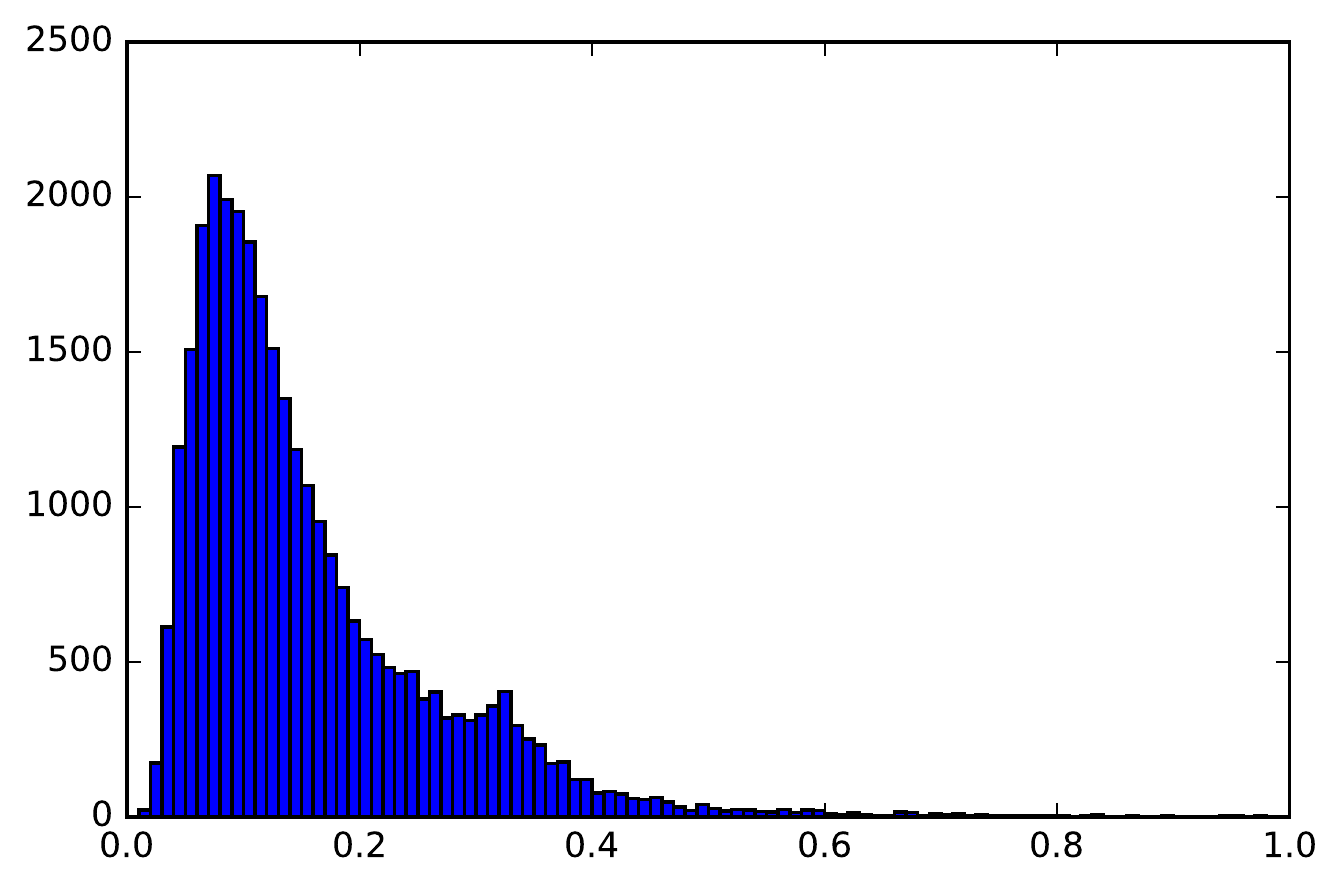} \hspace{6mm} &
\includegraphics[width=0.28\textwidth]{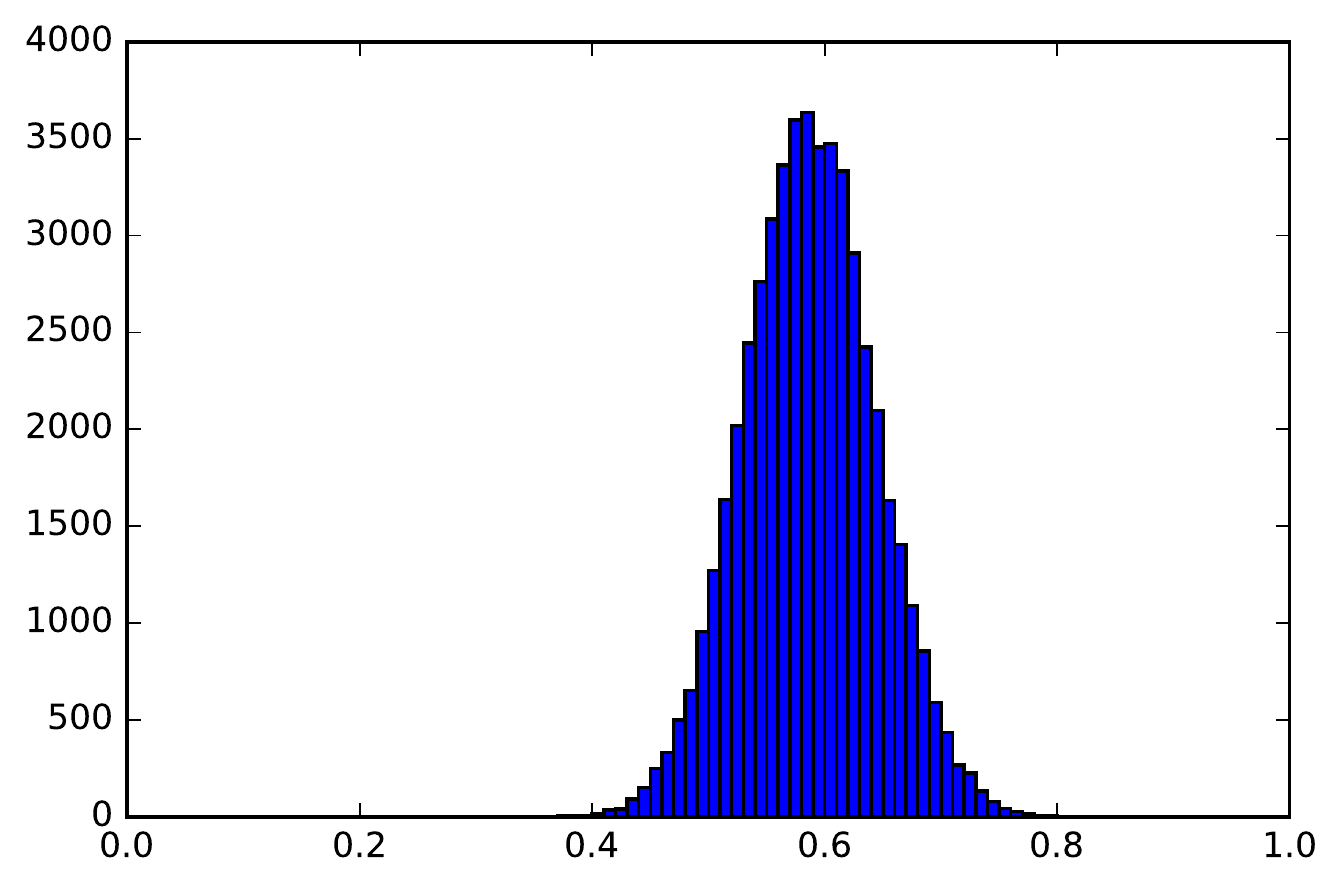}  \hspace{6mm}  & \includegraphics[width=0.28\textwidth]{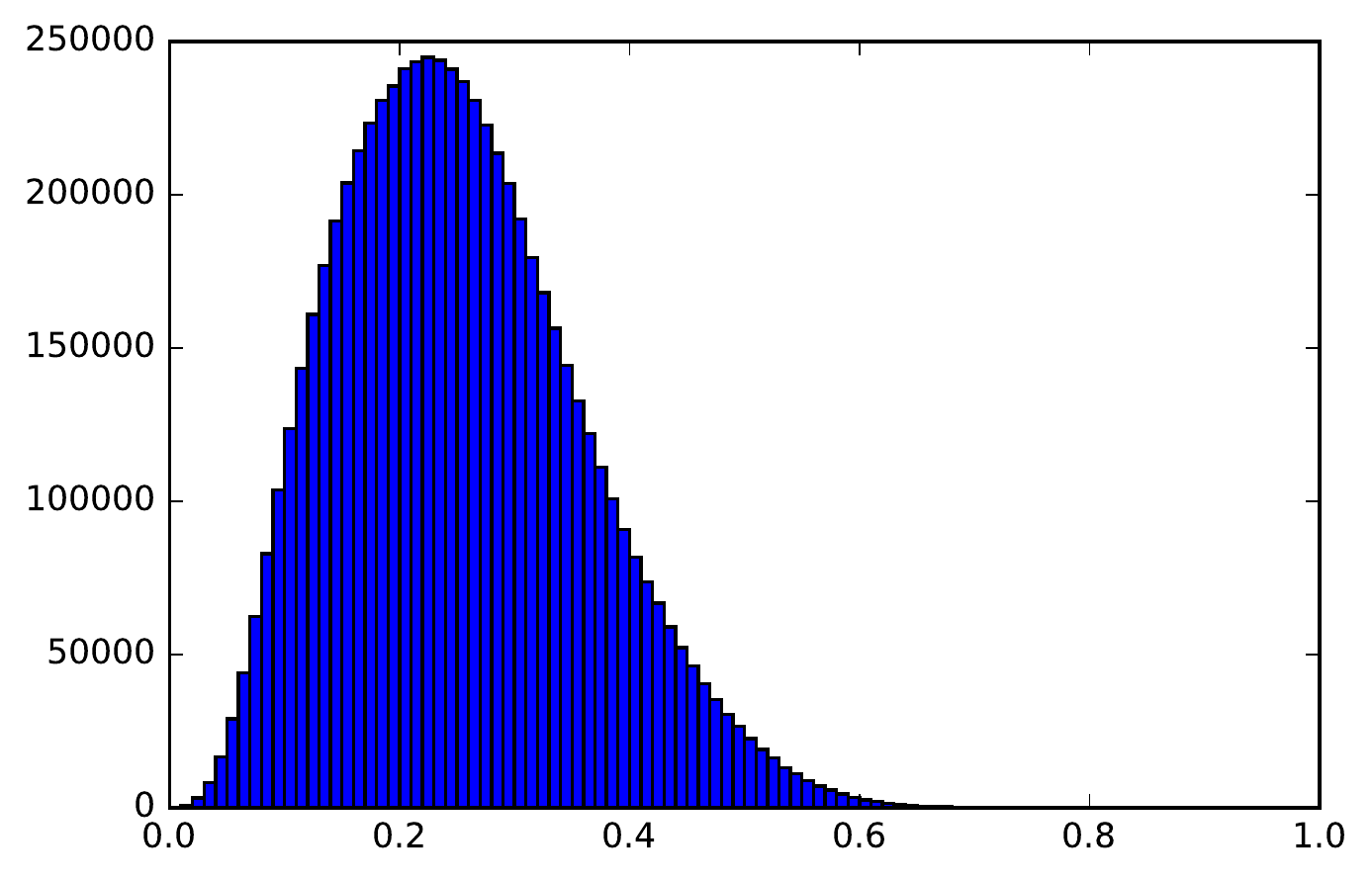} \\
(a) & (b) & (c) \\
\includegraphics[width=0.28\textwidth]{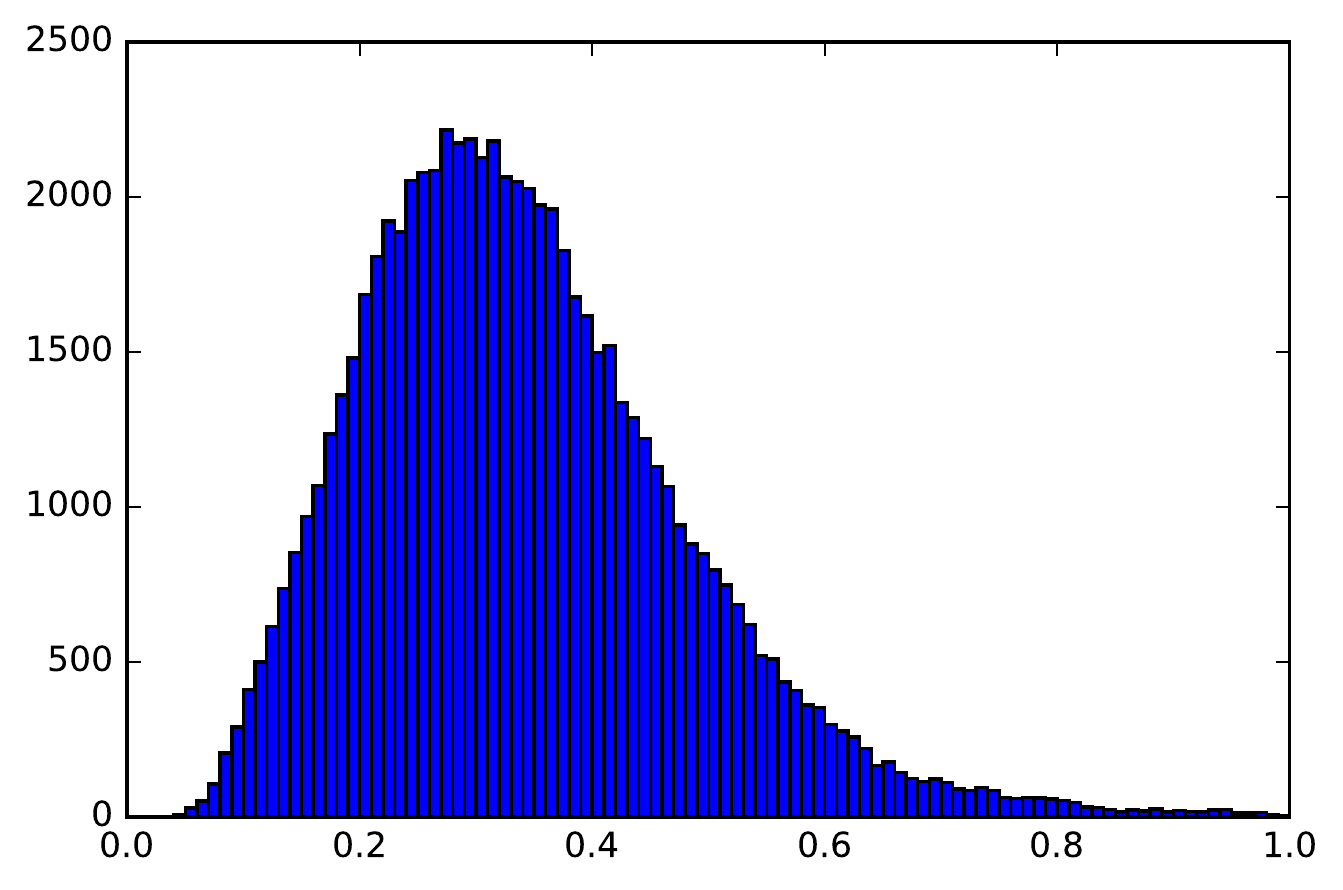} \hspace{6mm}  &
\includegraphics[width=0.28\textwidth]{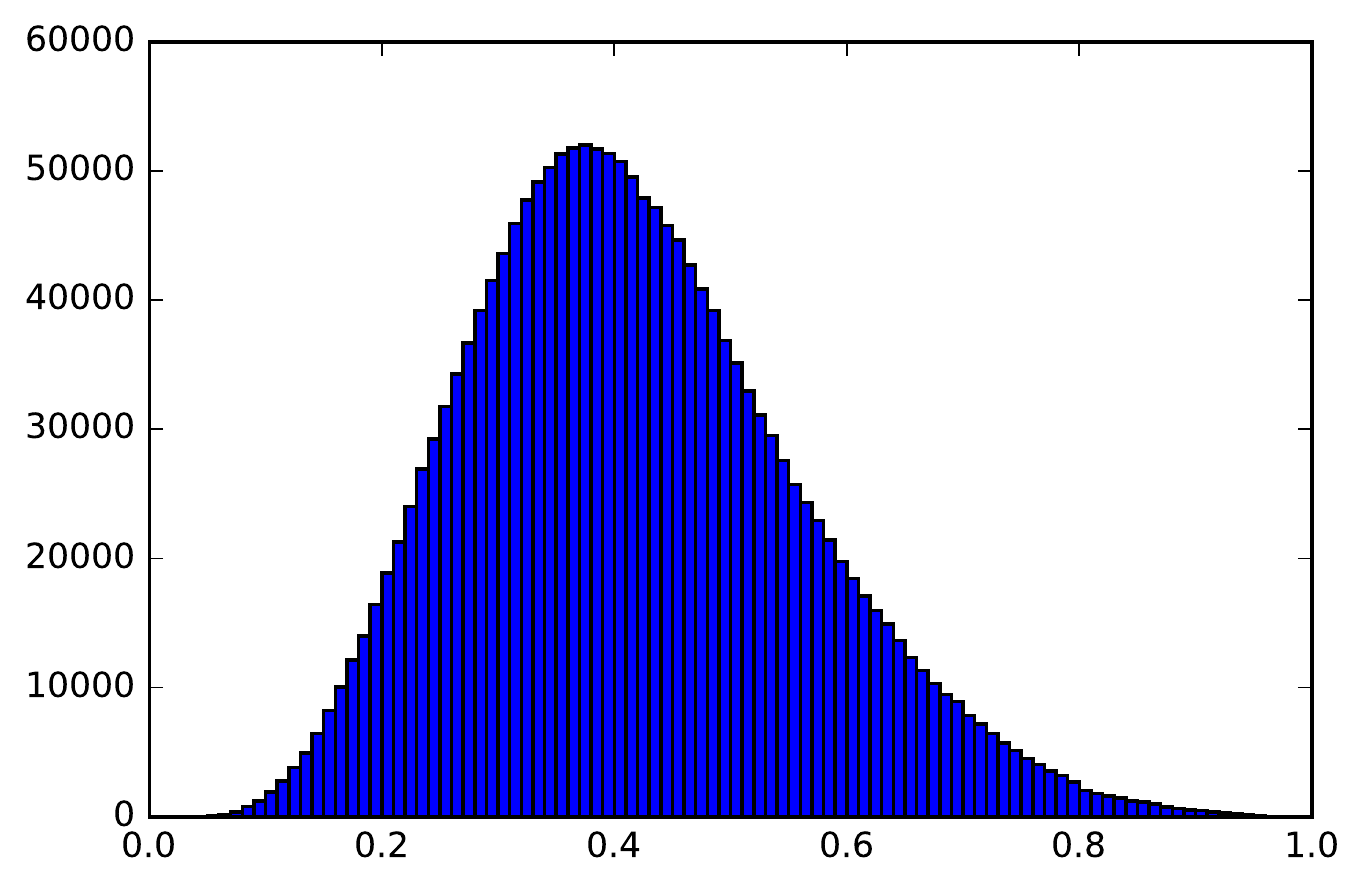} \hspace{6mm}  & \includegraphics[width=0.28\textwidth]{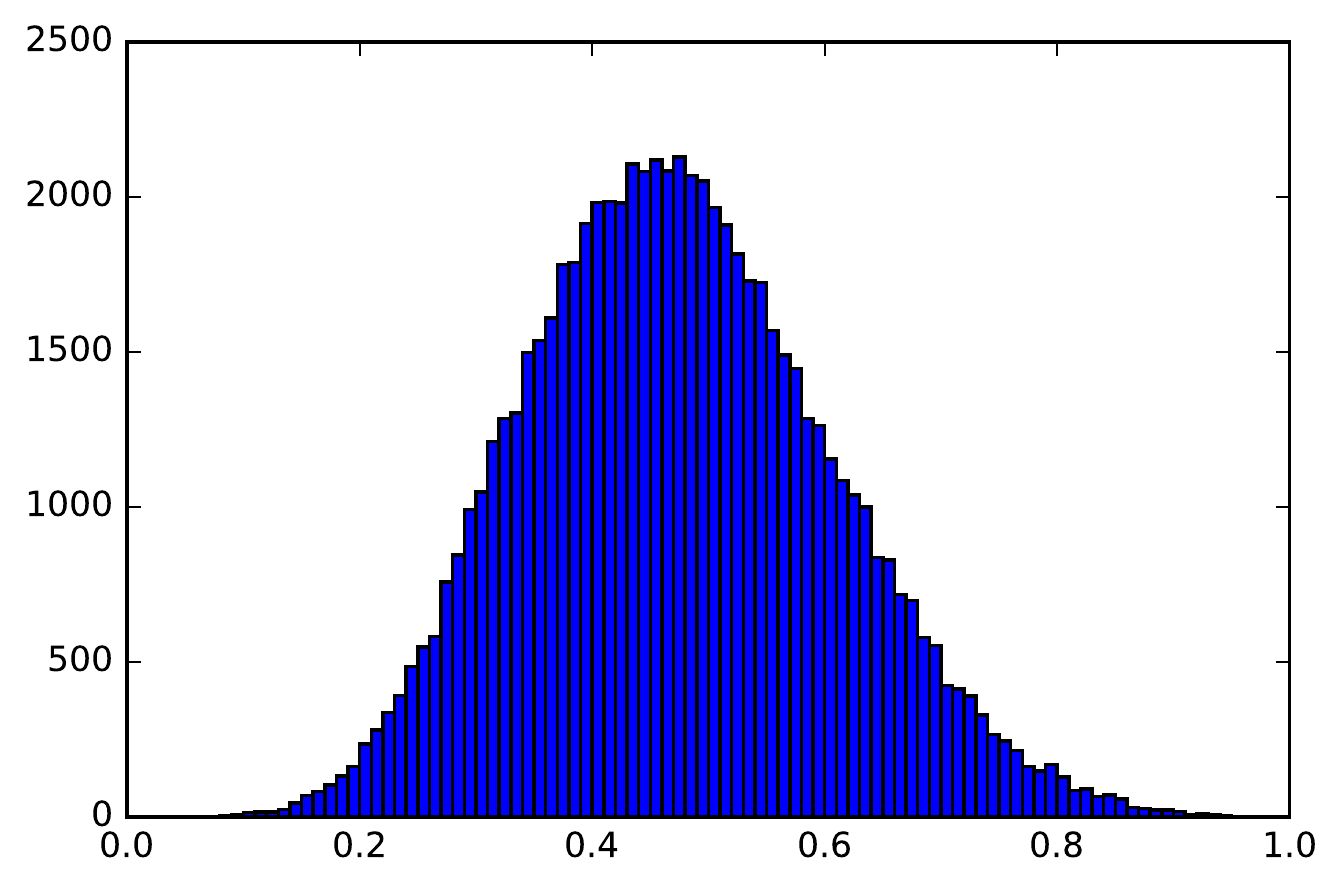} \\
(d) & (e) & (f) \\
\includegraphics[width=0.28\textwidth]{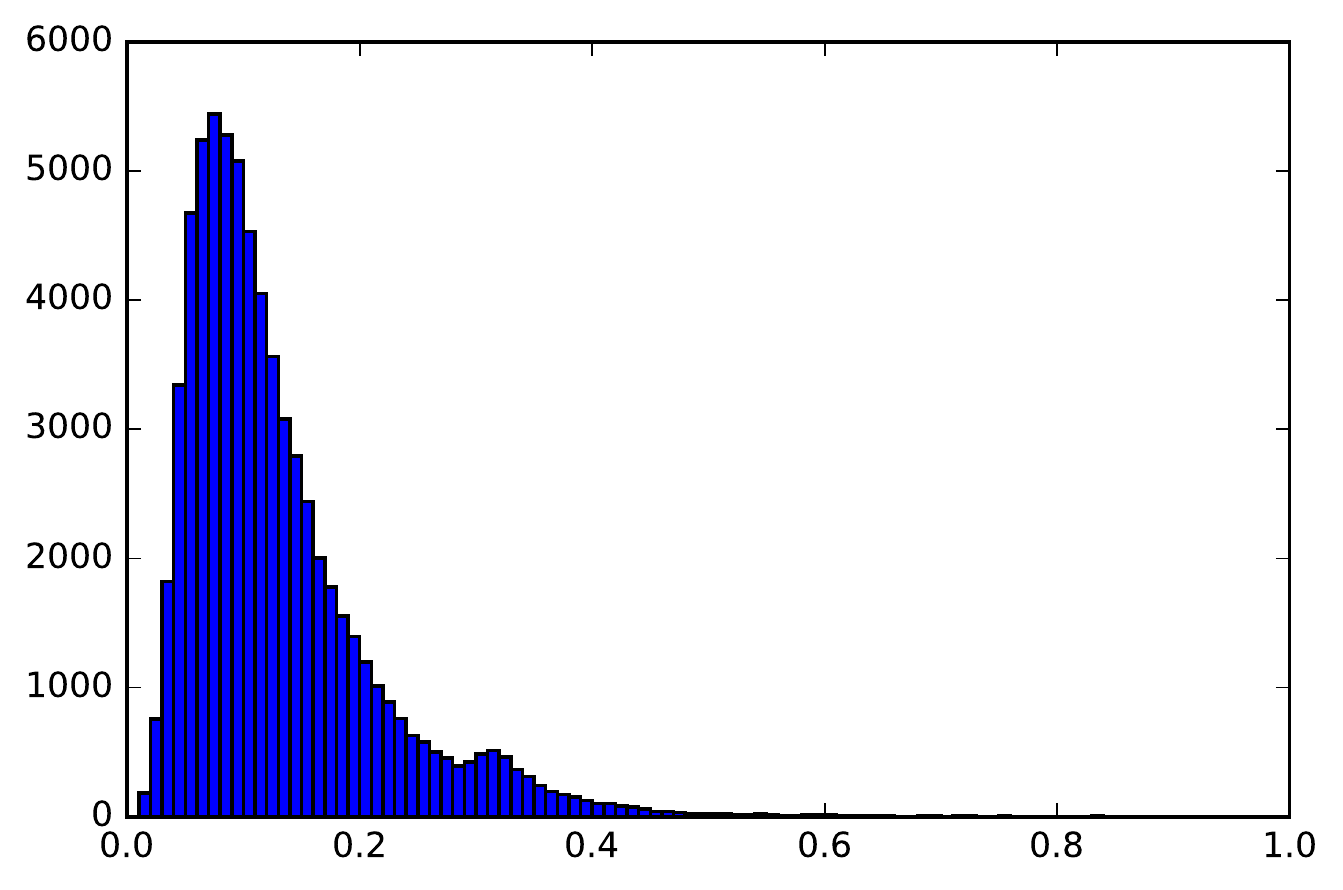} \hspace{6mm}  &
\includegraphics[width=0.28\textwidth]{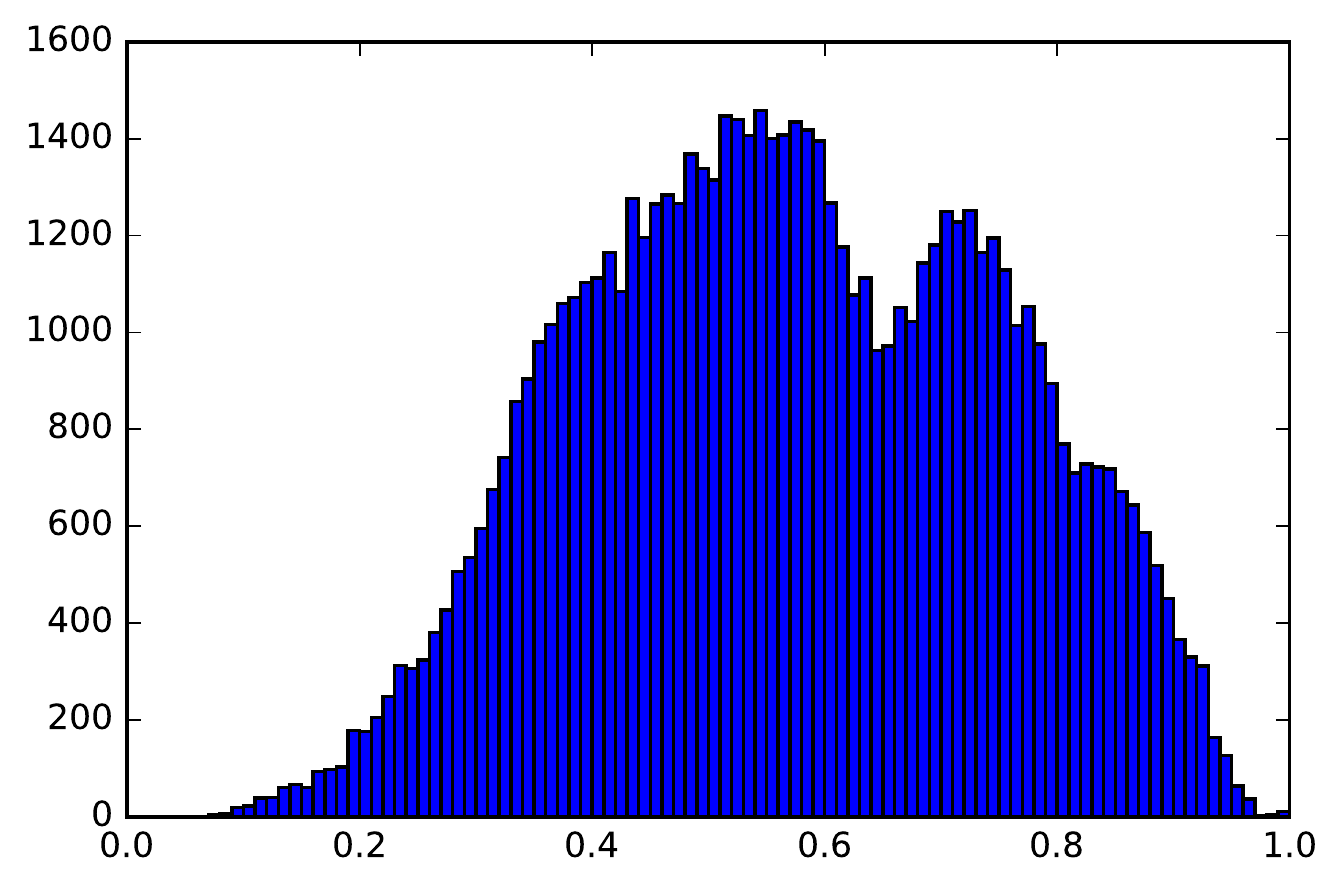} \hspace{6mm}  & \includegraphics[width=0.28\textwidth]{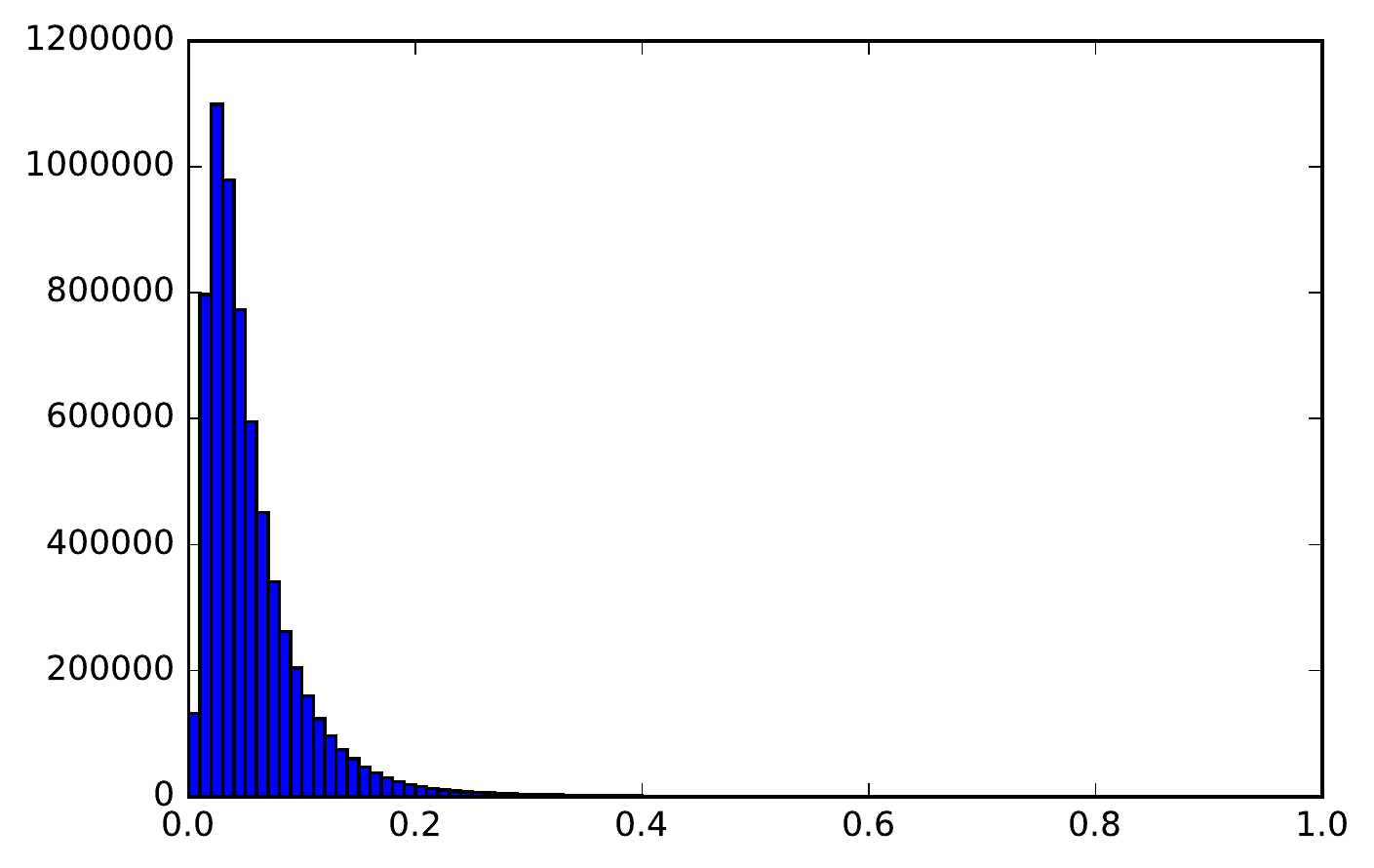} \\
(g) & (h) & (i) \\
\end{tabular}
\caption{\label{fig:pcadistortion} Distribution of distortion for PCA
$\left\{\left|\frac{\|\Pi x - \Pi y\|}{\|x - y\|} -1 \right| \right\}$ for all pairs of points $x,y \in T$. Here, the target dimension is set to 20 for  
(a) Computers, (b)  Earthquakes, (c) FordB,
(d) LargeKitchenAppliances  (e) Phoneme (f) RefrigerationDevices, (g)  ScreenType, (h) SmallKitchenAppliances, (i)   UWaveGesture. } 
\vspace{-4mm}
\end{figure*}

We prove a general theoretical result stated as Theorem~\ref{thm:main-theorem}. 
Then, we perform an extensive empirical evaluation of PCA with respect to how it distorts pairwise distances. Our findings show that  PCA's projection matrix  can be used as input $P$ to Algorithm~\ref{alg1}. 

\spara{Padding Johnson-Lindenstrauss dimensions.} Intuitively, the next theorem states that given a linear dimensionality reduction technique, we can append sufficiently many JL-dimensions to produce an embedding with smaller distortion.

\begin{theorem}
Let $\mathcal{T} \subset \field{R}^d$ be a finite set of points, and let $P \in \field{R}^{s \times d}$ be an orthonormal matrix with distortion $\delta$. Then, it is possible to reduce the distortion by a multiplicative factor $\epsilon<1$. Specifically, we can efficiently construct a matrix $\Pi \in \field{R}^{r \times n}$ such that with probability $0.99$, $\Pi$ has distortion $\delta\epsilon$. Here, the target dimension $r$ is equal to $s + O(\frac{\log |T|}{\epsilon^2})$. 
\label{thm:main-theorem}
\end{theorem}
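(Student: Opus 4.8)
\emph{Proof plan.} The map to analyze is precisely the output of Algorithm~\ref{alg1}: writing $S\in\field{R}^{k\times d}$ for the random sign matrix on $k=\Theta(\epsilon^{-2}\log|\mathcal{T}|)$ rows, the embedding is $w\mapsto \Pi w=(Pw,\,S(w-P^TPw))$, i.e.\ $\Pi$ is the $(s+k)\times d$ matrix obtained by stacking $P$ over $S(I-P^TP)$. Thus the target dimension is $r=s+k=s+O(\epsilon^{-2}\log|\mathcal{T}|)$ and the construction is clearly efficient, so the whole content of the theorem is the distortion bound. The plan is to fix an arbitrary difference vector $z=x-y$ with $x,y\in\mathcal{T}$ and control $\|\Pi z\|^2$ relative to $\|z\|^2$, then take a union bound over pairs.

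First I would split $z=z_1+z_2$ with $z_1=P^TPz$ in the row space of $P$ and $z_2=(I-P^TP)z$ orthogonal to it, so that $\|z\|^2=\|z_1\|^2+\|z_2\|^2$. Using $PP^T=I$ one gets $Pz_2=0$ and $\|Pz\|^2=z^TP^TPz=\|P^TPz\|^2=\|z_1\|^2$, hence
\[
\|\Pi z\|^2=\|Pz\|^2+\|S(z-P^TPz)\|^2=\|z_1\|^2+\|Sz_2\|^2 .
\]
The key observation is that the hypothesis ``$P$ has distortion $\delta$'' forces the residual $z_2$ to be short: from $\|Pz\|\ge(1-\delta)\|z\|$ we obtain $\|z_1\|^2\ge(1-\delta)^2\|z\|^2$, whence $\|z_2\|^2=\|z\|^2-\|z_1\|^2\le 2\delta\,\|z\|^2$.

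Next I would invoke the constructive Johnson--Lindenstrauss bound (Lemma~\ref{lem:jl}) for the point set $\{(I-P^TP)w:w\in\mathcal{T}\}\subset\field{R}^d$, of size at most $|\mathcal{T}|$, with accuracy parameter $\delta'=\Theta(\epsilon)$ (a suitable small constant times $\epsilon$) and failure probability $0.01$ — exactly what $k=\Theta(\epsilon^{-2}\log|\mathcal{T}|)$ guarantees. On the resulting event, which holds with probability at least $0.99$, $\bigl|\,\|Sz_2\|^2-\|z_2\|^2\,\bigr|\le\bigl((1+\delta')^2-1\bigr)\|z_2\|^2\le 3\delta'\,\|z_2\|^2$ for every pair simultaneously. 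Combining with the previous bound on $\|z_2\|^2$,
\[
\bigl|\,\|\Pi z\|^2-\|z\|^2\,\bigr|=\bigl|\,\|Sz_2\|^2-\|z_2\|^2\,\bigr|\le 3\delta'\,\|z_2\|^2\le 6\delta\delta'\,\|z\|^2 ,
\]
so $\|\Pi z\|^2\in(1\pm 6\delta\delta')\|z\|^2$. Taking square roots (via $\sqrt{1+t}\le 1+t/2$ and $\sqrt{1-t}\ge 1-t$ for $0\le t<1$) bounds the distortion of $\Pi$ by $6\delta\delta'$; picking the constant in $\delta'=\Theta(\epsilon)$ so that $6\delta'\le\epsilon$ yields distortion at most $\delta\epsilon$.

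The whole substance is the single observation that a low-distortion $P$ automatically makes the off-subspace component $z_2$ small, so that the $O(\delta')$ \emph{relative} error the JL matrix introduces on $z_2$ contributes only an $O(\delta\delta')$ relative error to the full vector $z$ — this is what makes the distortion \emph{multiply} ($\delta\mapsto\delta\epsilon$) rather than merely stay at the JL level $\delta'$. I do not expect a genuine obstacle beyond getting this idea; the remaining points (the union bound over the $\binom{|\mathcal{T}|}{2}$ difference vectors being absorbed into the logarithm defining $k$, and the constant-factor loss in going from squared norms to norms, hidden in the $\Theta(\cdot)$ for $\delta'$) are routine.
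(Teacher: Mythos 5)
Your proposal is correct and follows essentially the same route as the paper's proof: the same orthogonal decomposition $z=P^TPz+(I-P^TP)z$, the same key observation that distortion $\delta$ for $P$ forces $\|(I-P^TP)z\|^2\le 2\delta\|z\|^2$, and the same application of the constructive JL lemma to the residuals with accuracy $\Theta(\epsilon)$ followed by a union bound and a square-root step. The only differences are cosmetic (the paper works with the normalized secant set and the constant $\epsilon/8$ instead of your $\epsilon/6$), so there is nothing further to add.
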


\begin{proof}
Consider the normalized secant set $\mathcal{S}(T) = \{\frac{x - y}{\|x - y\|} : x, y \in T\}$. Observe, that since $P$ has distortion at most $\delta$ with respect to $T$ 
\begin{equation}
    \forall_{v \in \mathcal{S}(T) } \|P v\| \geq (1-\delta).
\end{equation}

Let $Q = I-P^TP$ be the projection matrix on the orthogonal subspace to  the subspace spanned by the rows of $P$. 
Moreover, let $S \in \field{R}^{s\times d}$ be a random projection matrix, with $s =\Theta(\frac{\log |T|}{\epsilon^2})$ rows. By 
 invoking the JL lemma~\ref{lem:jl} with $\gamma = 0.01$ we obtain that with probability $0.99$ the following holds 
\begin{equation}
    \forall_{v \in\mathcal{S}(T) } (1-\beta/8) \|Qv\| \leq \|SQv\| \leq (1+\beta/8) \|Qv\|. 
    \label{}
\end{equation}

\begin{figure*}[t]
    \begin{center}
    \includegraphics[width=0.4\textwidth]{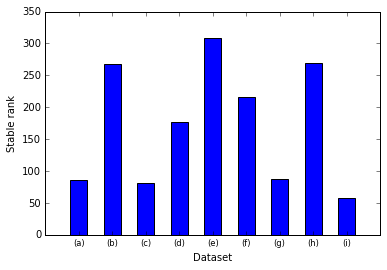}
\end{center}
    \caption{Stable rank $\frac{(\sum \sigma_i)^2}{\sum \sigma_i^2}$ for (a) Computers , (b)  Earthquakes, (c) FordB,
(d) LargeKitchenAppliances  (e) Phoneme (f) RefrigerationDevices, (g)  ScreenType, (h) SmallKitchenAppliances, (i)   UWaveGesture.
        \label{fig:stable}
    }
\end{figure*}

\noindent Let us condition on this event. Now, consider the map
\begin{equation}
    \Pi : w \mapsto (Pw, SQw) \in \field{R}^{d + s}
\end{equation}
    We claim that this map has distortion $\delta\epsilon$ with respect to $T$. It is enough to prove that for all $v \in \mathcal{S}(T)$, we have $1 -  \delta \epsilon \leq \|\Pi v\| \leq 1 + \delta \epsilon $. Fix any $v\in \mathcal{S}(T)$, and let $\zeta := \|Q v\|^2$. By Pythagoras Theorem, we have $1 = \|v\|^2 = \|Pv\|^2 + \|Q v\|^2 = \|P v\|^2 +\zeta$, by reordering $\|Pv\|^2 = 1 -\zeta$.  The assumption that $P$ has distortion at most $\delta$ with respect to $T$, yields that $\sqrt{1 - \zeta} = \|Pv\| \geq (1-\delta)$, and therefore 
    
    $$\zeta   \leq 2\delta.$$

On the other hand, we have
\begin{align*}
    \| \Pi v\|^2 & = \|Pv\|^2 + \|S Q w\|^2  \leq (1 - \zeta) + (1 + \frac{\epsilon}{8})^2 \|Q w\|^2 \\
    & \leq (1 -  \zeta) + (1 + \frac{\epsilon}{2})  \zeta  = 1 + \frac{\epsilon \zeta}{2} \\
    & \leq 1 + \delta \epsilon
\end{align*}

And similarly
\begin{align*}
    \|\Pi v\|^2 & = \|Pv\|^2 + \|S Q w\|^2 
  \geq (1 - \zeta) + (1 - \frac{\epsilon}{8})^2 \|Q w\|^2 \\
    & \geq (1 - \zeta) + (1 - \frac{\epsilon}{2})\zeta  = 1 - \frac{\beta\zeta}{2} \\
    & \geq 1 - \delta \epsilon
\end{align*}

And finally $\sqrt{1 - \delta\epsilon} \leq \|\Pi v\| \leq \sqrt{1 +  \delta\epsilon}$, which implies the desired result. QED
\end{proof}

\noindent A direct corollary of our result is that if the dataset is such that a low-rank linear dimensionality reduction method yields  distortion  $\sqrt{\delta}$, we can significantly reduce the distortion all the way to $\delta$, by padding $\frac{\log |T|}{\delta}$ JL dimensions. For comparison, with Johnson-Lindenstrauss the necessary target dimension would be of order $O\big( \frac{\log |T|}{\delta^2}\big)$. Even more importantly \methodname leverages the geometric structure which is typically inherent to the dataset.  

\spara{PCA and distance distortion.} By the triangle inequality and standard SVD properties we obtain 
 
\begin{align*}
|x-y| \leq |x-f(x)| +|y-f(y)| + |f(x)-f(y)| & \\ 
\leq 2\sigma_{k+1} +  |f(x)-f(y)| \rightarrow & \\ 
|x-y| - |f(x)-f(y)| \leq 2\sigma_{k+1}. &
\end{align*}

\noindent Here $\sigma_1 \geq \sigma_2 \geq \ldots$ are the  singular values of the data matrix ordered in non-increasing order.  Also, by Pythagoras's theorem $|x-y| \geq |f(x)-f(y)|$.  Combining these facts we obtain that the distortion of the pairwise distance between $x,y \in T$ satisfies 
 
\begin{align*}
| \frac{|f(x)-f(y)|}{|x-y|}-1\big| &\in [0, \min( \frac{2\sigma_{k+1}}{|x-y|}, 1)]
\end{align*}

\begin{figure*}[!ht]
\centering
\begin{tabular}{@{}c@{}@{\ }c@{}@{\ }c@{}} \includegraphics[width=0.28\textwidth]{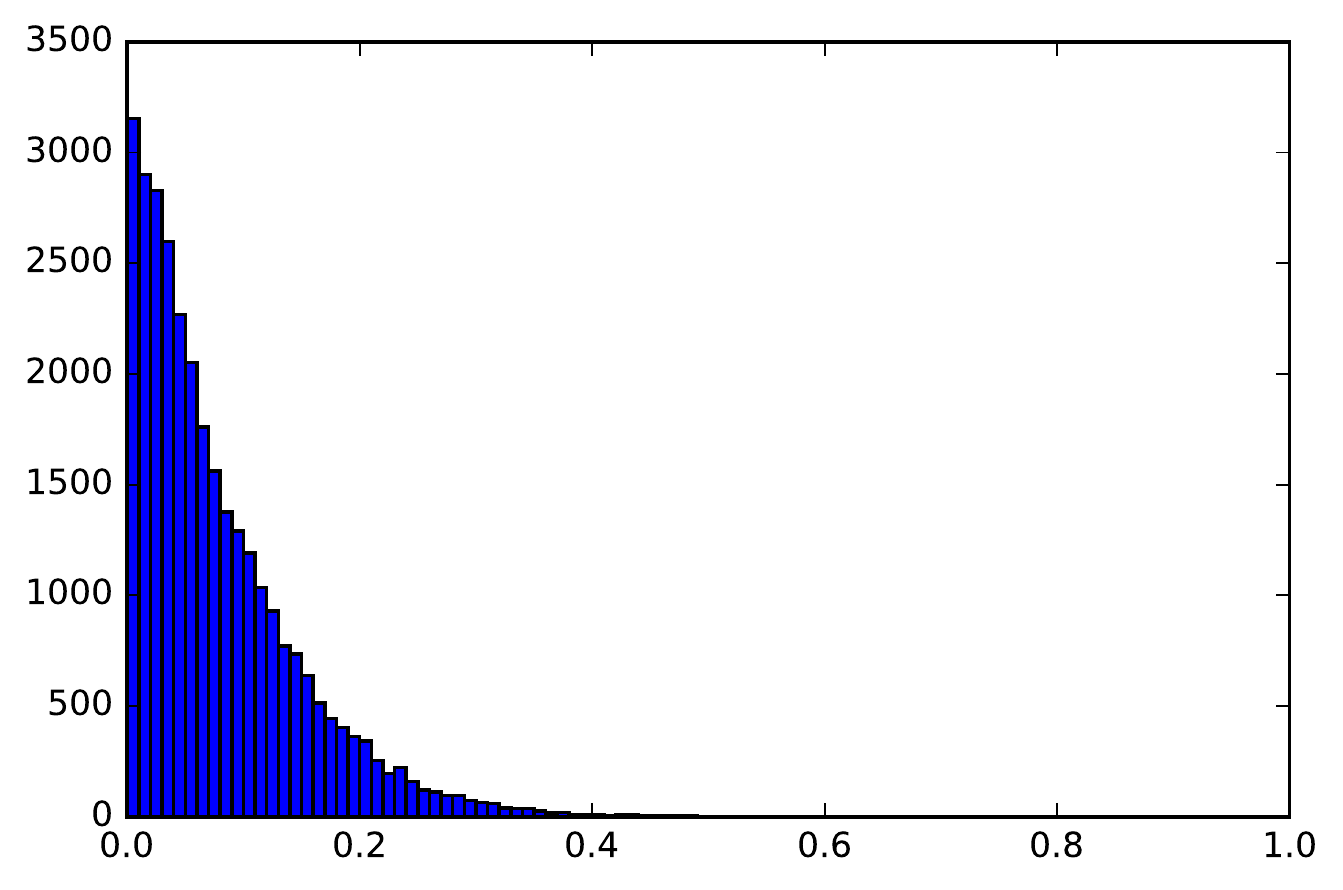} \hspace{6mm} &
\includegraphics[width=0.28\textwidth]{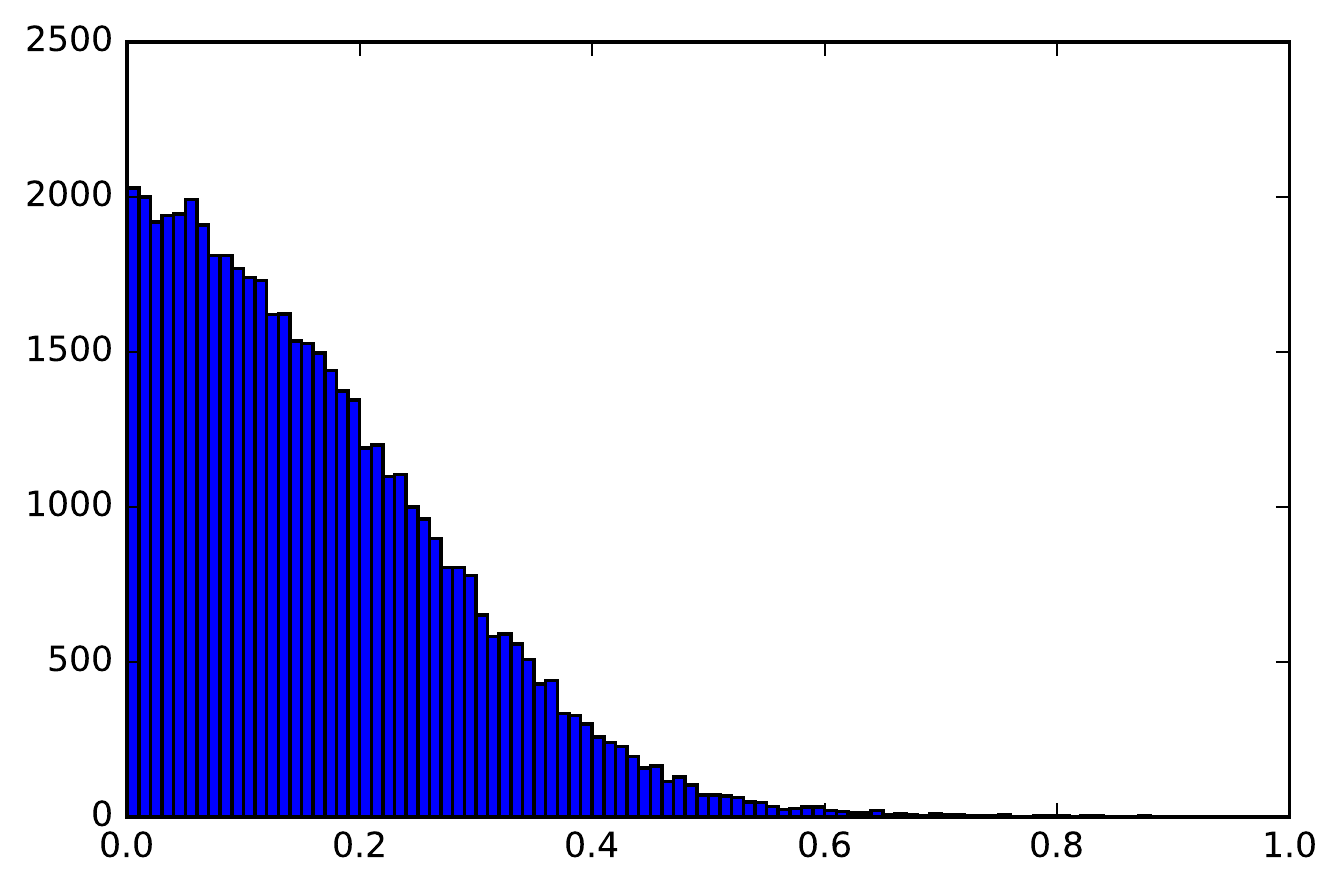}  \hspace{6mm}  & \includegraphics[width=0.28\textwidth]{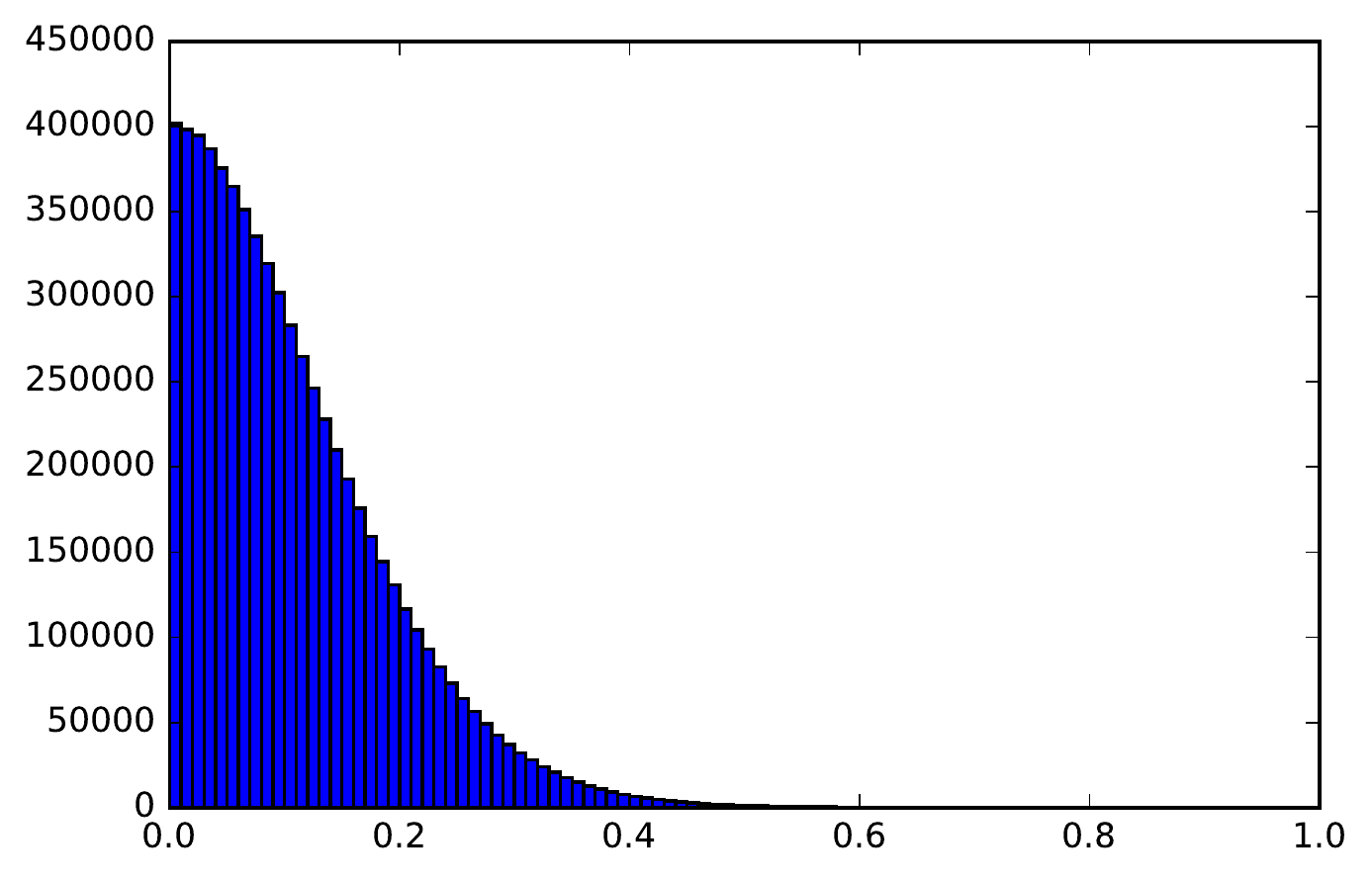} \\
(a) & (b) & (c) \\
\includegraphics[width=0.28\textwidth]{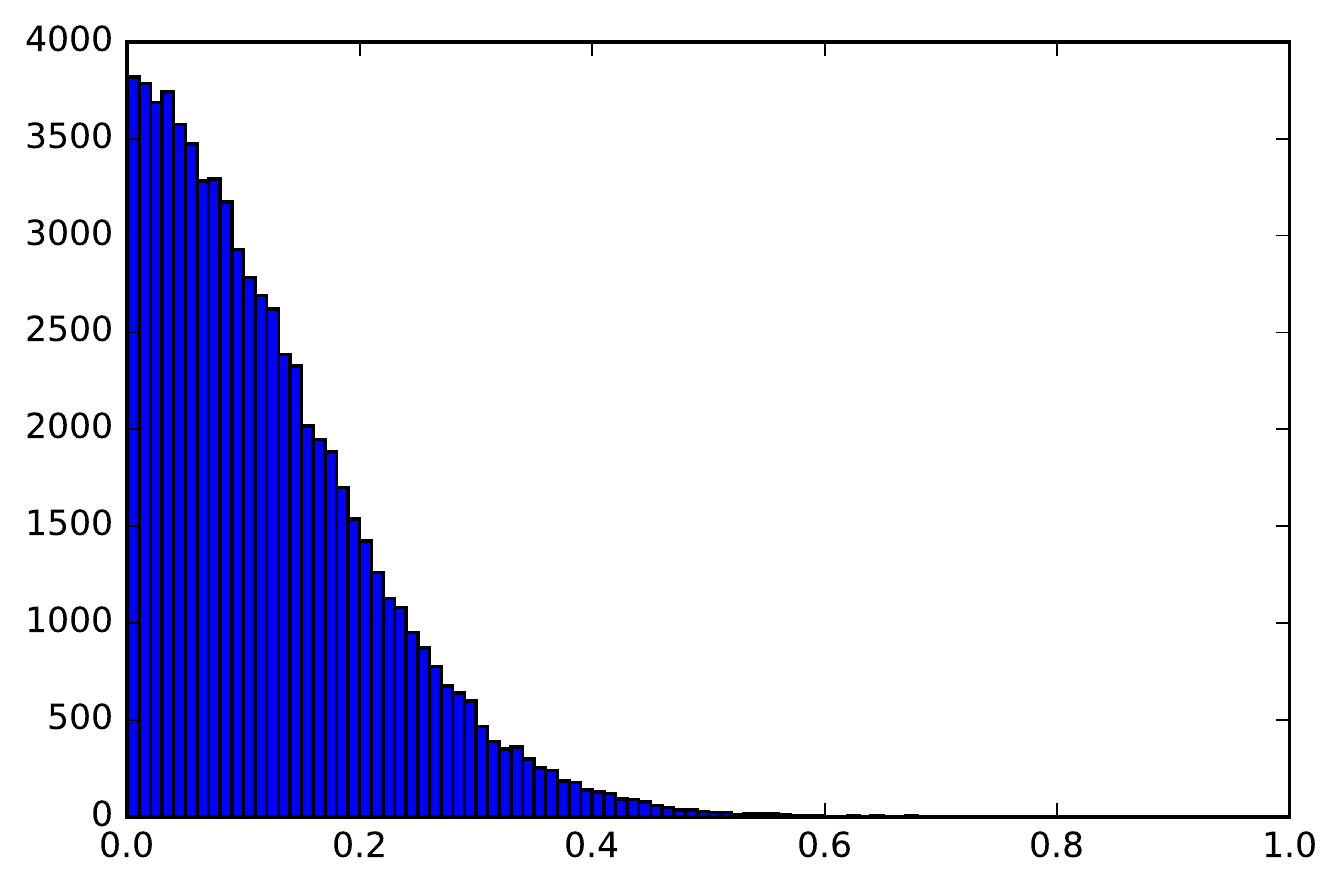} \hspace{6mm}  &
\includegraphics[width=0.28\textwidth]{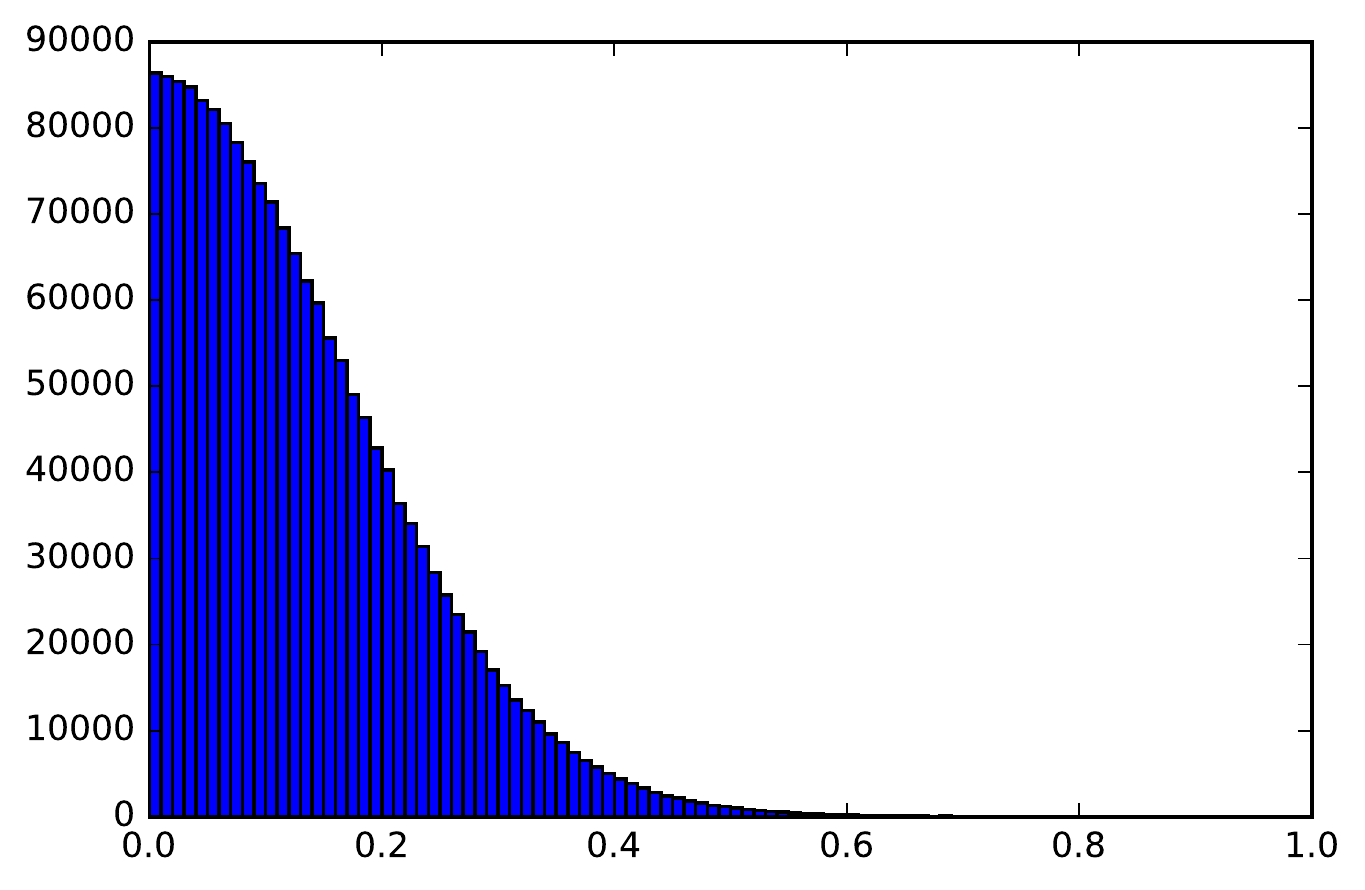} \hspace{6mm}  & \includegraphics[width=0.28\textwidth]{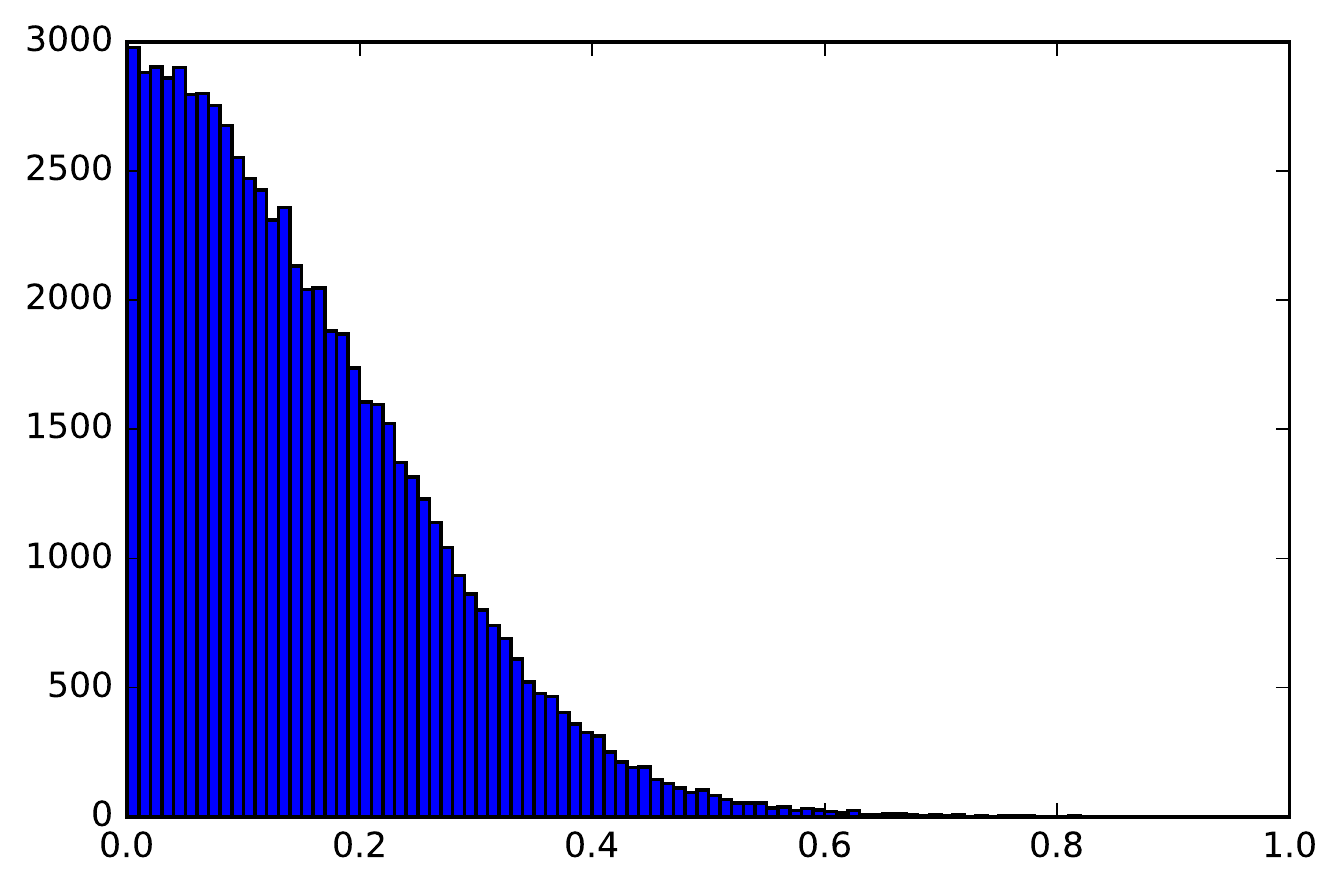} \\
(d) & (e) & (f) \\
\includegraphics[width=0.28\textwidth]{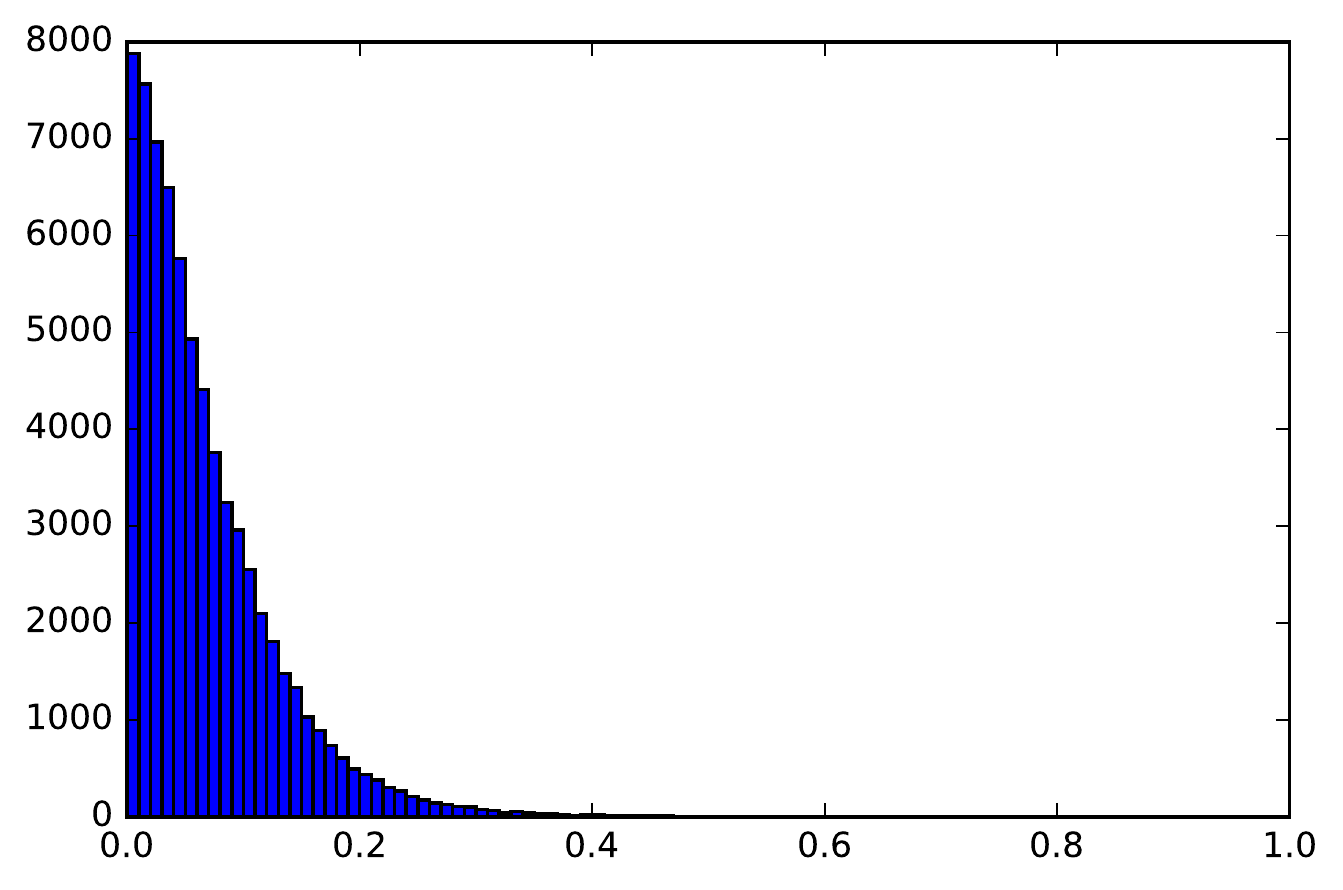} \hspace{6mm}  &
\includegraphics[width=0.28\textwidth]{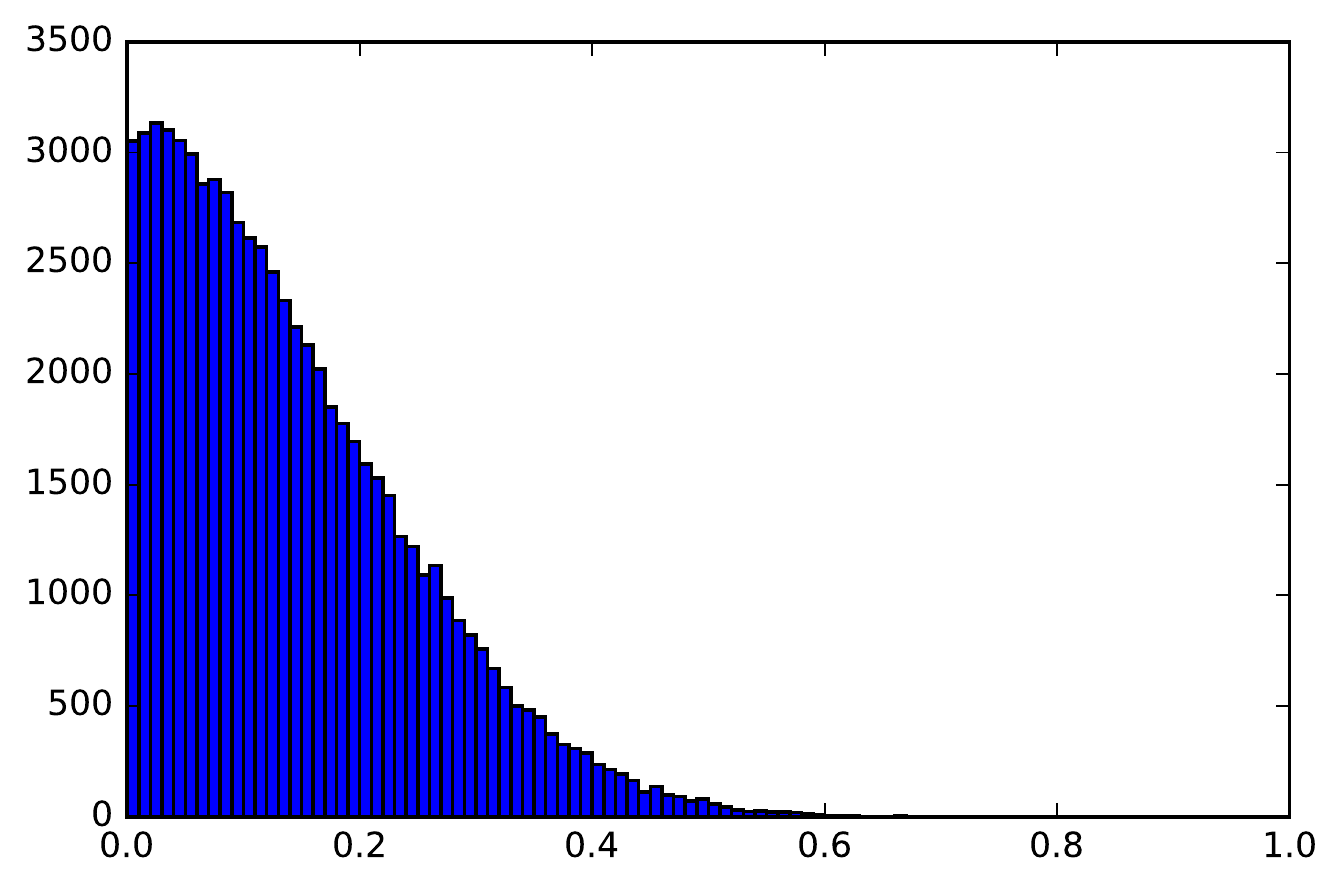} \hspace{6mm}  & \includegraphics[width=0.28\textwidth]{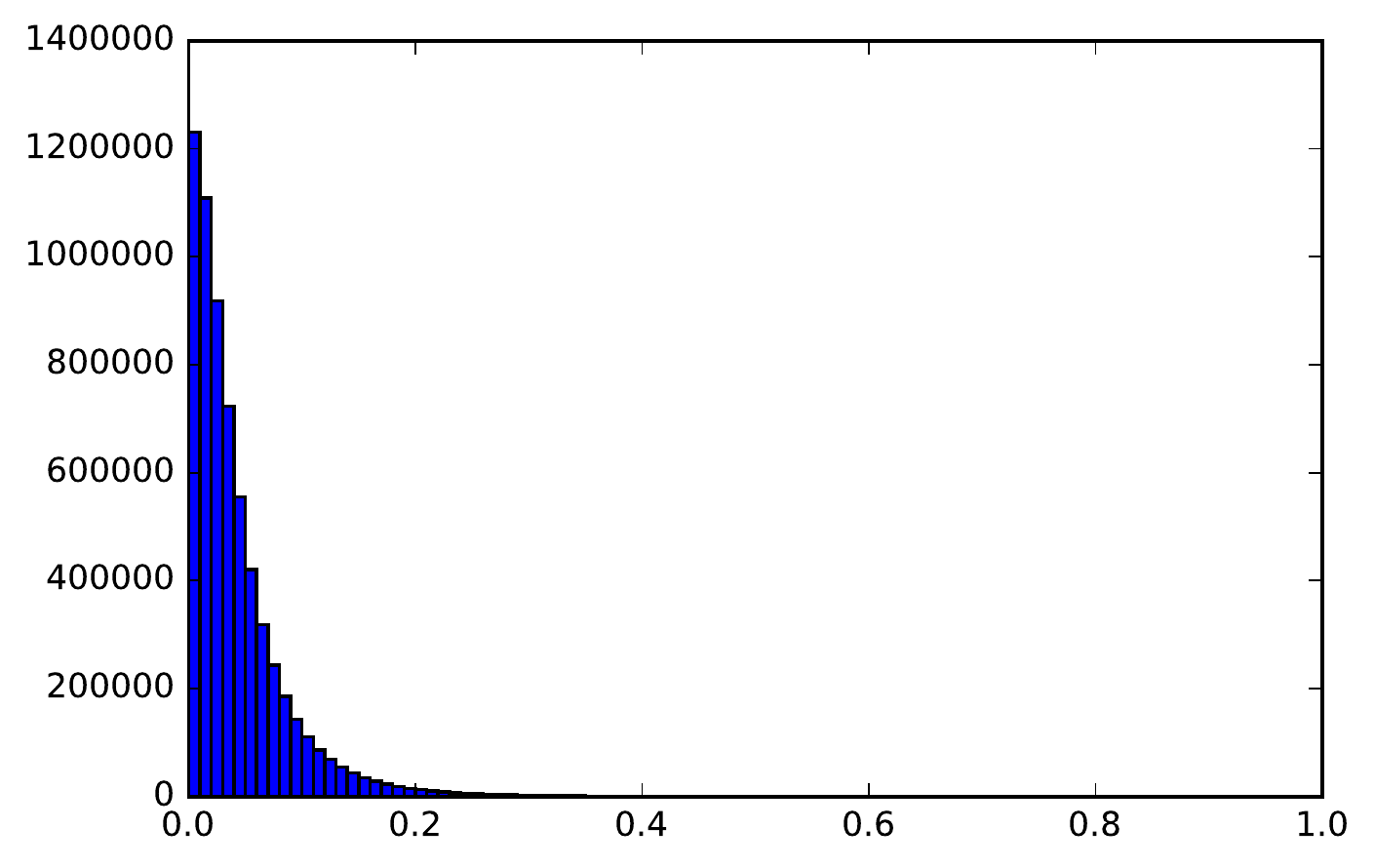} \\
(g) & (h) & (i) \\
\end{tabular}
\caption{\label{fig:distortion} Distribution of distortion for \methodname
$\left\{\left|\frac{\|\Pi x - \Pi y\|}{\|x - y\|} -1 \right| \right\}$ for all pairs of points $x,y \in T$. Here, the target dimension is set to 20. Datasets (a){\sc Computers} , (b) {\sc Earthquakes}, (c) {\sc FordB},
(d) {\sc LargeKitchenAppliances}  (e) {\sc Phoneme} (f)  {\sc RefrigerationDevices}, (g)  {\sc ScreenType}, (h) {\sc SmallKitchenAppliances},   (i)  {\sc UWaveGesture}. \methodname improves significantly PCA's distortions with negligible computational cost.} 
\vspace{-4mm}
\end{figure*}

While PCA provides no other guarantees concerning the distribution of pairwise distances for  an arbitrary cloud of points, empirically we find across a wide variety of datasets, including images, time series, gene data that the bulk of pairwise distances are preserved fairly well.    
Figure~\ref{fig:pcadistortion} shows the distortion that PCA causes for each pair of points  $\left\{\left|\frac{\|\Pi x - \Pi y\|}{\|x - y\|} -1 \right|  \right\}$ for all pairs of points $x,y \in T$ for nine datasets, see Table~\ref{tab:datasets}. Here, the target dimension of PCA is set to 20. We note that despite the relatively large  number of principal components several pairwise distances become significantly distorted.  We also note that PCA performs better on the dataset UWaveGesture, see Figure~\ref{fig:pcadistortion}(i), while for the rest of the dataset it does not perform as well. We investigate why this is the case, and we deduce a rule of thumb when PCA has a ``hard time'' to preserve pairwise distances.


For every  dataset in the collection, we compute its stable rank, defined as $\frac{(\sum \sigma_i)^2}{\sum \sigma_i^2}$. 
Stable rank is frequently used in order to evaluate  the intrinsic dimensionality of the data.  Figure~\ref{fig:stable} shows the stable ranks of the 9 datasets. It turns out that the dataset UWaveGesture has the smallest stable rank compared to the other datasets.  
As a rule of thumb we expect that when the stable rank is small few principal components will capture well the dataset, resulting in a better quality near-isometry.  On the contrary, when the stable rank is large, then we expect that a low-rank PCA approximation will distort significantly many pairwise distances. This is because the stable rank captures well how well the singular values are concentrated around their mean, and when it is large it implies that there is a lot of mass in the non-top principal components.  
 
%

\section{Experimental results}
\label{sec:exp}
\subsection{Experimental setup} 
\label{subsec:setup}
\begin{table}[!ht]
\begin{center}
\begin{tabular}{|l|c|c|} \hline
Name  & $n$ & $d$    \\ \hline 
\textcolor{green}{$\blacksquare$} Computers  & 250  & 720 \\
\textcolor{green}{$\blacksquare$} Earthquakes  & 139  & 512 \\
\textcolor{green}{$\blacksquare$} FordB  & 810  & 500 \\
\textcolor{green}{$\blacksquare$} LargeKitchenAppliances & 375  & 720 \\
\textcolor{green}{$\blacksquare$} Phoneme  & 214  & 1024 \\
\textcolor{green}{$\blacksquare$} RefrigerationDevices  & 375  & 720 \\
\textcolor{green}{$\blacksquare$} ScreenType  & 375  & 720 \\
\textcolor{green}{$\blacksquare$} UWaveGesture & 896  & 945 \\
\textcolor{blue}{$\bigtriangleup$} MNIST subset & 800  & 768 \\ 
\textcolor{cyan}{$\odot$}  Z\"{u}rich Database & 1\,005 & 307\,200  \\
\textcolor{red}{$\diamond $}   {\sc Glass } (6 Classes)    & 214  & 10  \\
\textcolor{red}{$\diamond$}   {\sc Ionosphere} (2 Classes) & 351  & 34 \\
\textcolor{red}{$\diamond$}  {\sc Iris}  (3  Classes) &150 & 4 \\
\textcolor{red}{$\diamond$}  {\sc Pima} diabetes (2 Classes) &768  & 8 \\
\textcolor{red}{$\diamond$}  {\sc Vehicle} (4  Classes) &846 & 18  \\
\textcolor{red}{$\diamond$}  {\sc Wine} (3 Classes) &178  & 13    \\
\hline
\end{tabular}
\end{center}
\caption{\label{tab:datasets} Datasets used in our experiments. }
\end{table}

\spara{Datasets.}  Table~\ref{tab:datasets} contains the description of the datasets we used. From the MNIST collection of hand-written letters~\cite{mnist}, a dataset with 60\,000 digit images, each of size $28\times 28$) , we selected 800 uniformly at random for our experiments. Unfortunately, NuMax takes an excessive amount of time ($>$ 9 hours) without completing.  
We use 9 datasets from the UCR time series collection~\cite{UCRArchive}. Again, for this collection of datasets NuMax does not complete in a reasonable amount of time.   
For our approximate nearest neighbor application we use  the ZuBuD image database \cite{shao03-zubud}. This  dataset contains 1\,005 images of 201 buildings in the city of Z\"{u}rich. There exist 5 images
of each building taken from different viewpoints, each of size 640 $\times$ 480 pixels. Finally we use 6  datasets from the UCI machine learning dataset collection \cite{UCI} for our classification experiments.

\spara{Implementation.} In our experiments, we use the following 
implementation for choosing the number of components for \methodname. For a given target dimension $r$, we use $\lfloor \frac{r}{2} \rfloor$ principal components for our matrix $P$ and $\lceil \frac{r}{2} \rceil$ JL components for our random projection matrix $S$. 
The resulting dimensionality reduction procedure   maps each vector $w$ in the cloud $T$ to the vector $(Pw, S(w - P^TPw))$. 
We have also experimented with using randomized PCA 
in place of the exact PCA using the approach proposed in \cite{Martinsson201147}. In our code we use the implementations provided by the {\it Sklearn} package. Our code  is available at \url{https://github.com/tsourolampis/Adagio}. 
Experiments were conducted on laptop with processor Intel(R) Core(TM) i5-3317U CPU @ 1.70GHz, and 8GB of RAM.

\spara{Synopsis of our findings.} Before we delve into the experimental findings we present a  synopsis of our findings in Table~\ref{tab:synopsis}. As we can see, our method \methodname combines the best of all worlds: data-awareness, near-isometry, and scalability.

\begin{table}[t]
\centering \small
\caption{\label{tab:synopsis} Synopsis of desired properties 
for random projections (JL), PCA, NuMax, and our method
\methodname.}  
\begin{tabular}{r|ccc|}
\multicolumn{1}{c}{} &  \multicolumn{3}{c}{Desired Properties} \\
\cline{2-4}
&
\multicolumn{1}{c}{Data-aware} &
\multicolumn{1}{c}{Runs fast} &
\multicolumn{1}{c|}{Near Isometry}  \\ \cline{2-4}
\textsf{JL}   & \xmark &   \cmark &   \cmark \\ 
\textsf{PCA}    & \cmark &   \cmark &   \xmark \\ 
\textsf{NuMax}    & \cmark &   \xmark &   \cmark \\ 
\textsf{\methodname}    & \cmark &   \cmark &   \cmark \\  
\cline{2-4}
\end{tabular}
\end{table}

\subsection{Data-aware Near-Isometry} 

Recall that Figure~\ref{fig:pcadistortion} shows the distribution of pairwise distortions for PCA when the target dimension is equally to 20, or equivalently when the cloud of points is projected on a 20 dimensional linear subspace.  The computational cost for running PCA is at most few seconds for all datasets. We apply \methodname on the same collection of datasets using the same target dimension. As we described in Section~\ref{subsec:setup}, \methodname uses 10 principal components and 10 JL components. Figure~\ref{fig:distortion} shows the distribution of distortion for \methodname
$\left\{\left|\frac{\|\Pi x - \Pi y\|}{\|x - y\|} -1 \right| \right\}$ for all pairs of points $x,y \in T$. The run times of \methodname are essentially identical with PCA. Specifically,  the computational overhead of \methodname is at most one second (UWaveGesture). 
As we will see later, there  exist also instances for which \methodname is faster  than PCA.  At the same time \methodname --as even eyeballing shows -- achieves a higher quality isometry than PCA.  NuMax does not produce any output in a reasonable amount of time. 

The contrast between Figures~\ref{fig:pcadistortion} and ~\ref{fig:distortion}   illustrates the importance of \methodname. Our method is able to preserve pairwise distances significantly better than PCA, is data-aware as it leverages the geometry through the top 10 principal components, and runs extremely faster than NuMax. The following experiments provide a detailed analysis of our method and its competitors.

\subsection{Dimension-distortion trade-off}

\begin{figure}
	\begin{center}
		\includegraphics[width=0.5\textwidth]{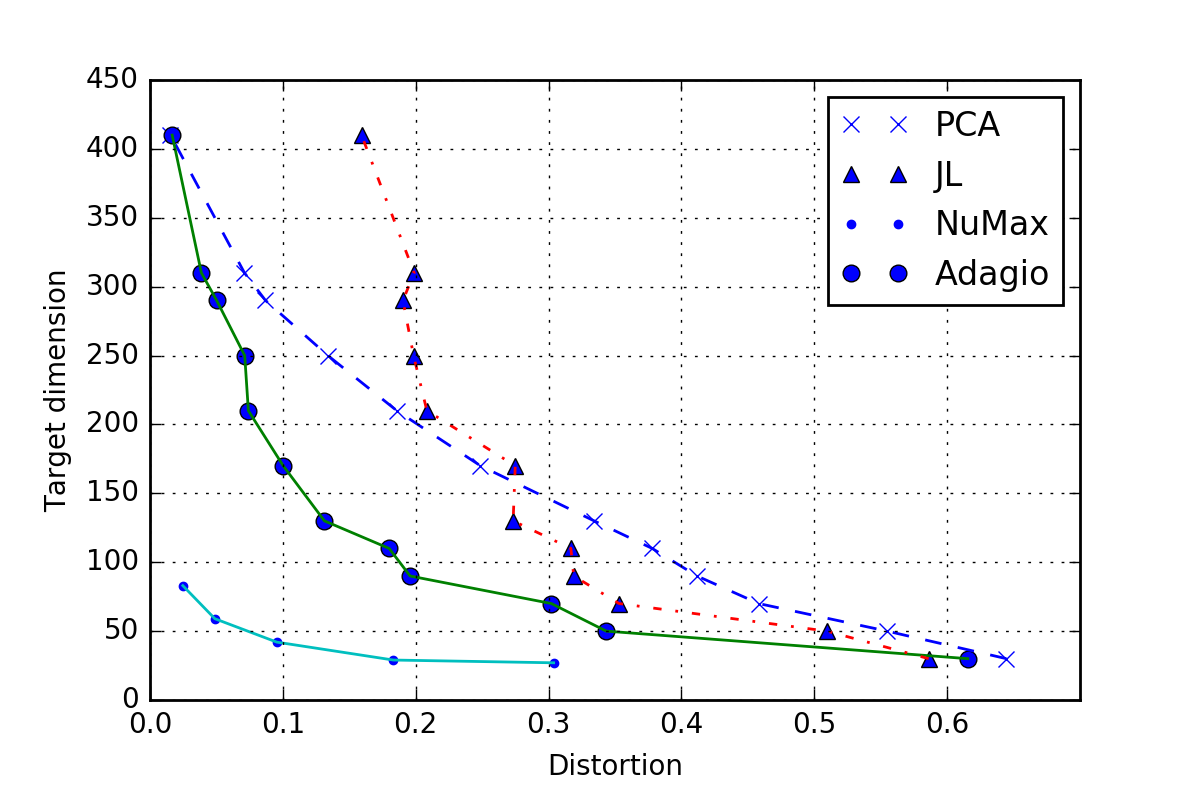}
	\end{center}
	\caption{
		\label{fig:mnist}
	Trade-off between target dimension and distortion for subset of $800$ pictures from MNIST dataset. }
\end{figure}

For a given distortion parameter $\delta$, what is the dimension of the subspace required by each method in order to achieve distortion at most $\delta$? This is one important question that we study empirically on the MNIST dataset. Specifically,  Figure \ref{fig:mnist} shows the experimental trade-off between the distortion $\delta$ of the embedding and the target dimension  $r$ for various dimensionality reduction methods. \methodname  yields a trade-off that consistently outperforms both Johnson-Lindenstrauss and PCA in the distortion regime up to $0.4$. In particular, in order to preserve all pairwise distances up to $15\%$ error, one can reduce dimension to $\sim 130$ with \methodname, whereas with PCA one needs dimension at least $240$.

NuMax algorithm finds an even better dimensionality reduction  than Adagio: for distortion $15\%$, one can use target dimension $\sim 50$. Unfortunately, this comes at a significant computational cost. The cost of computing such a reduction with NuMax is often prohibitively large,  usually by several orders of magnitude larger than Adagio, as we will see in the next Section where we study computational efficiency.

\begin{figure*}
	\begin{center}
		\begin{tabular}{cccc}
		\includegraphics[width=0.23\textwidth]{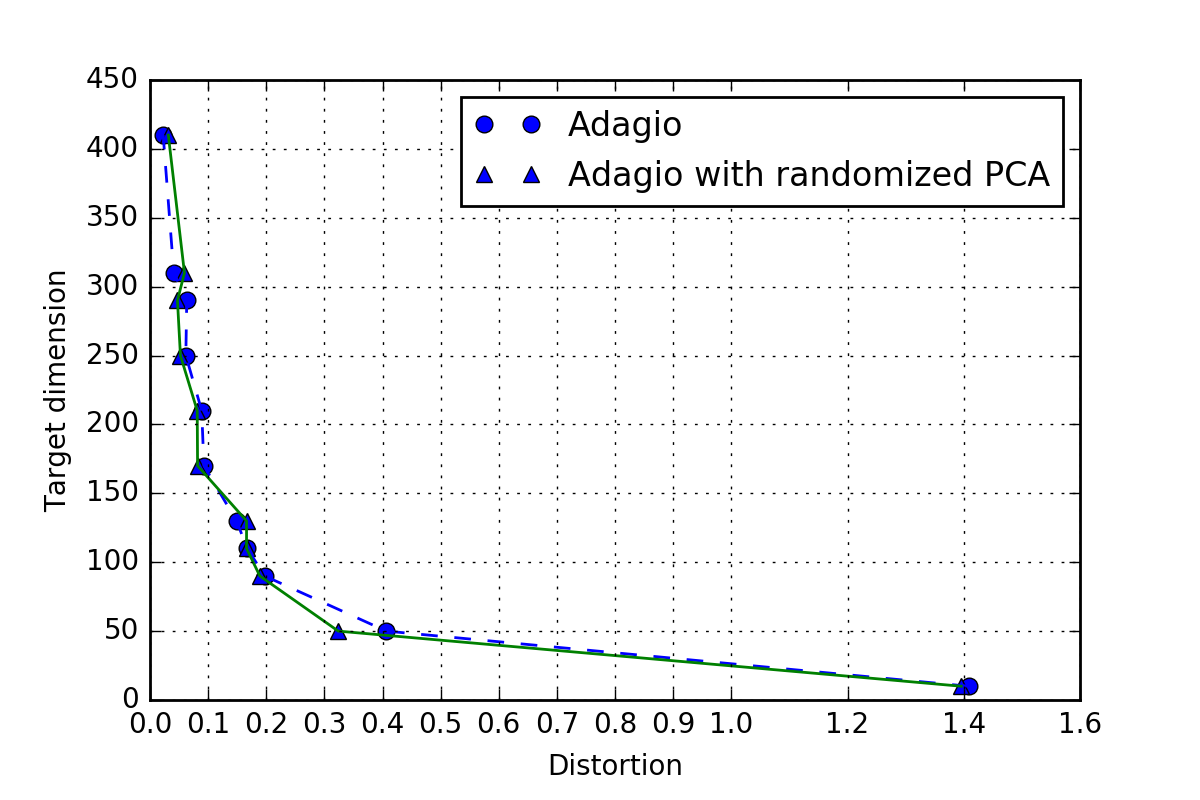} &
		\includegraphics[width=0.23\textwidth]{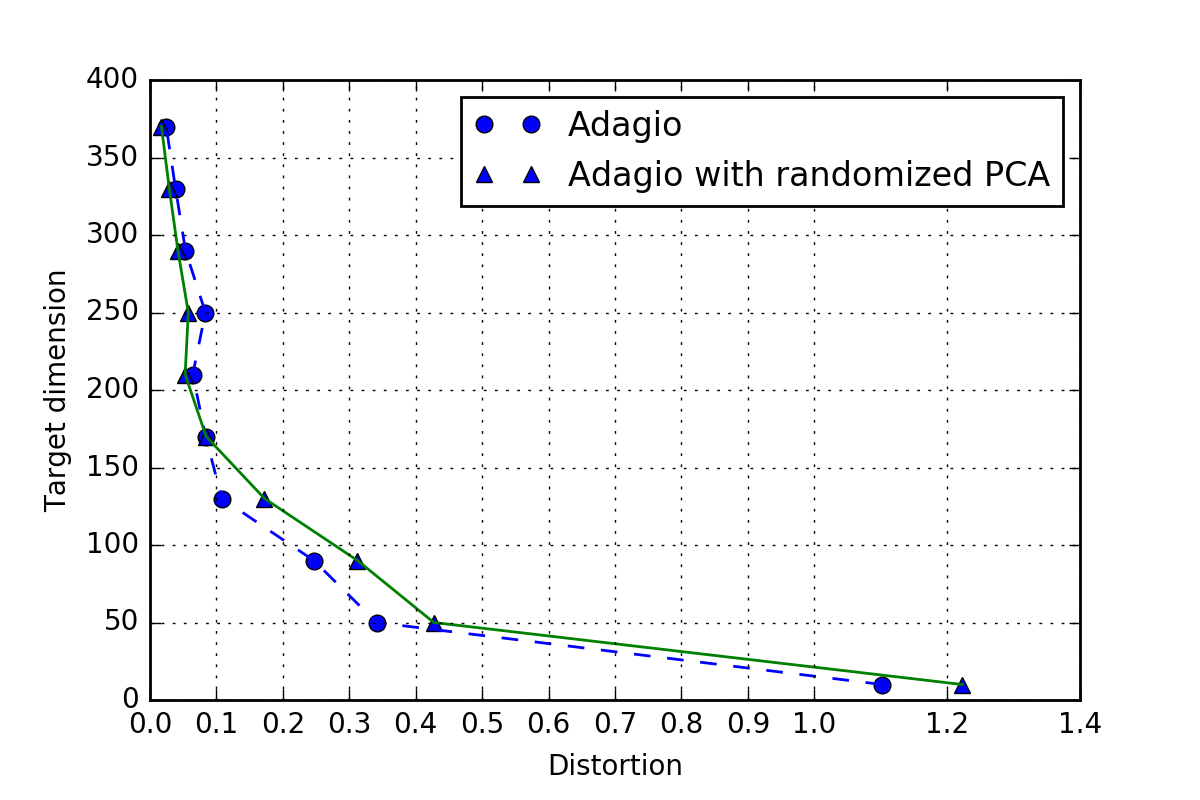}  & 
		\includegraphics[width=0.23\textwidth]{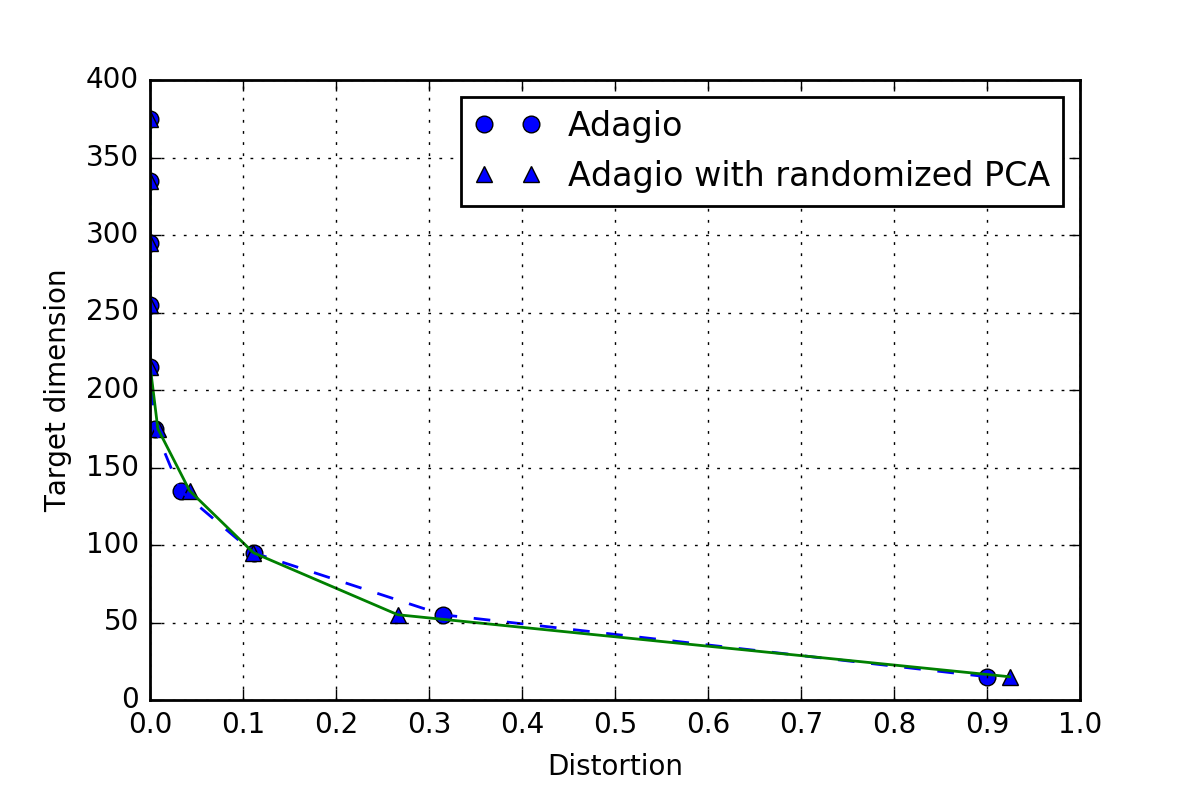} &
		\includegraphics[width=0.23\textwidth]{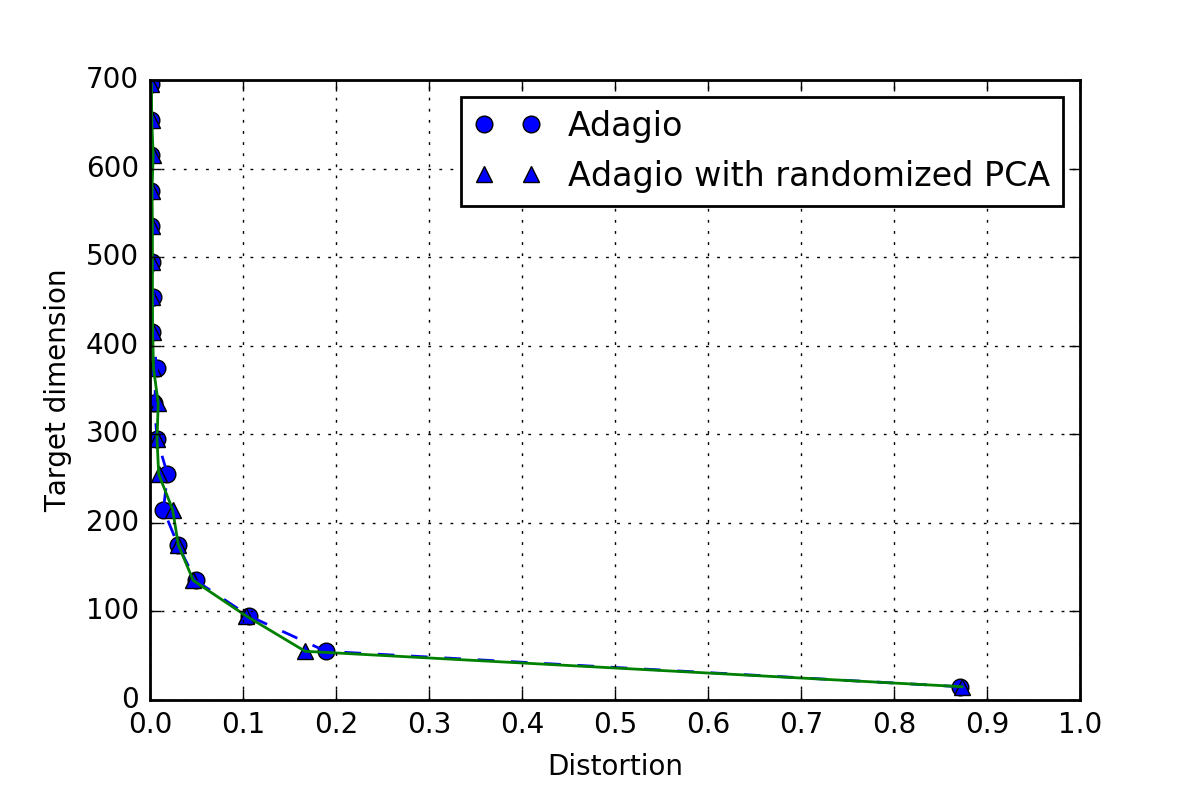} \\
		(a) & (b) &		(c) & (d)\\
				\includegraphics[width=0.23\textwidth]{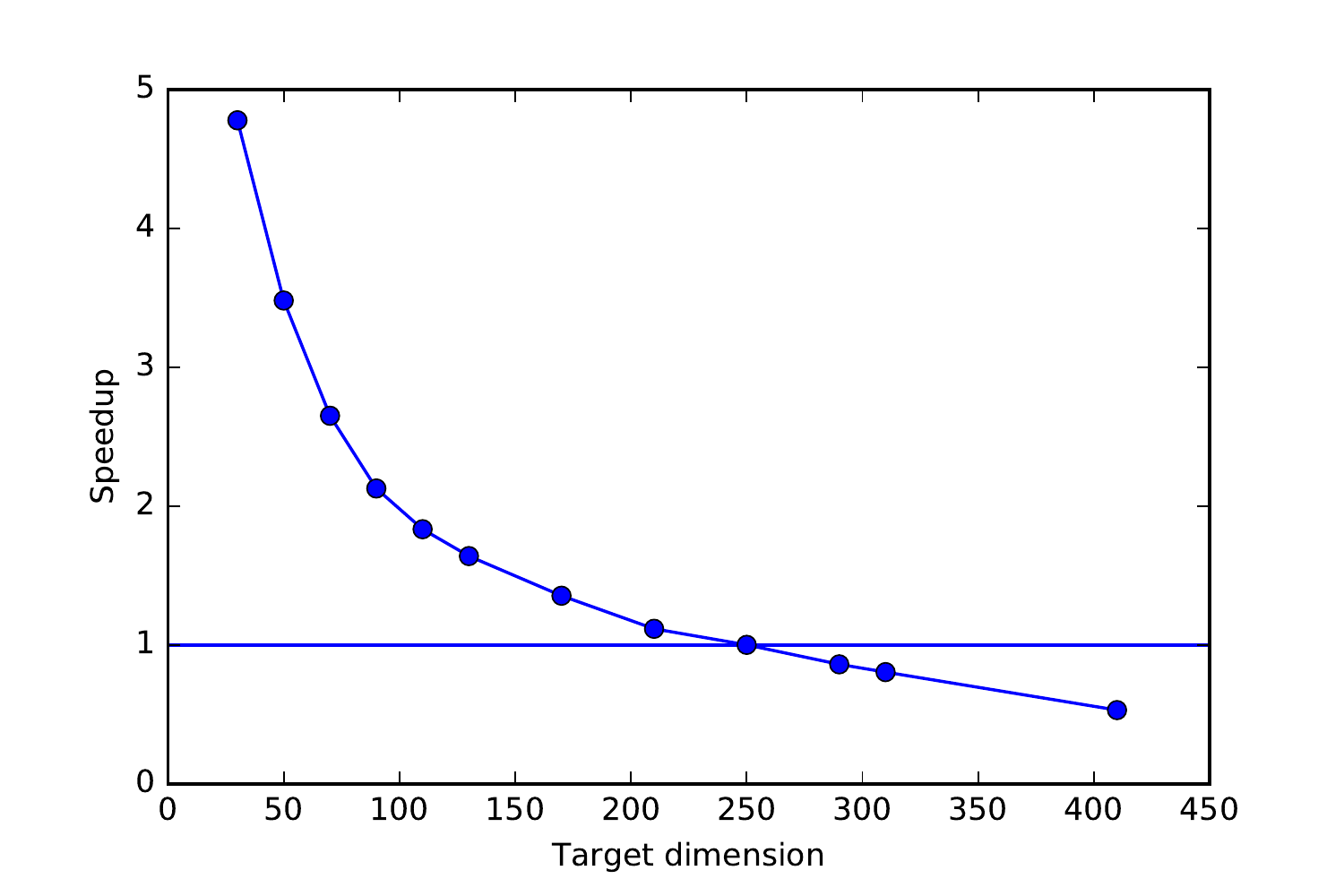} &
		\includegraphics[width=0.23\textwidth]{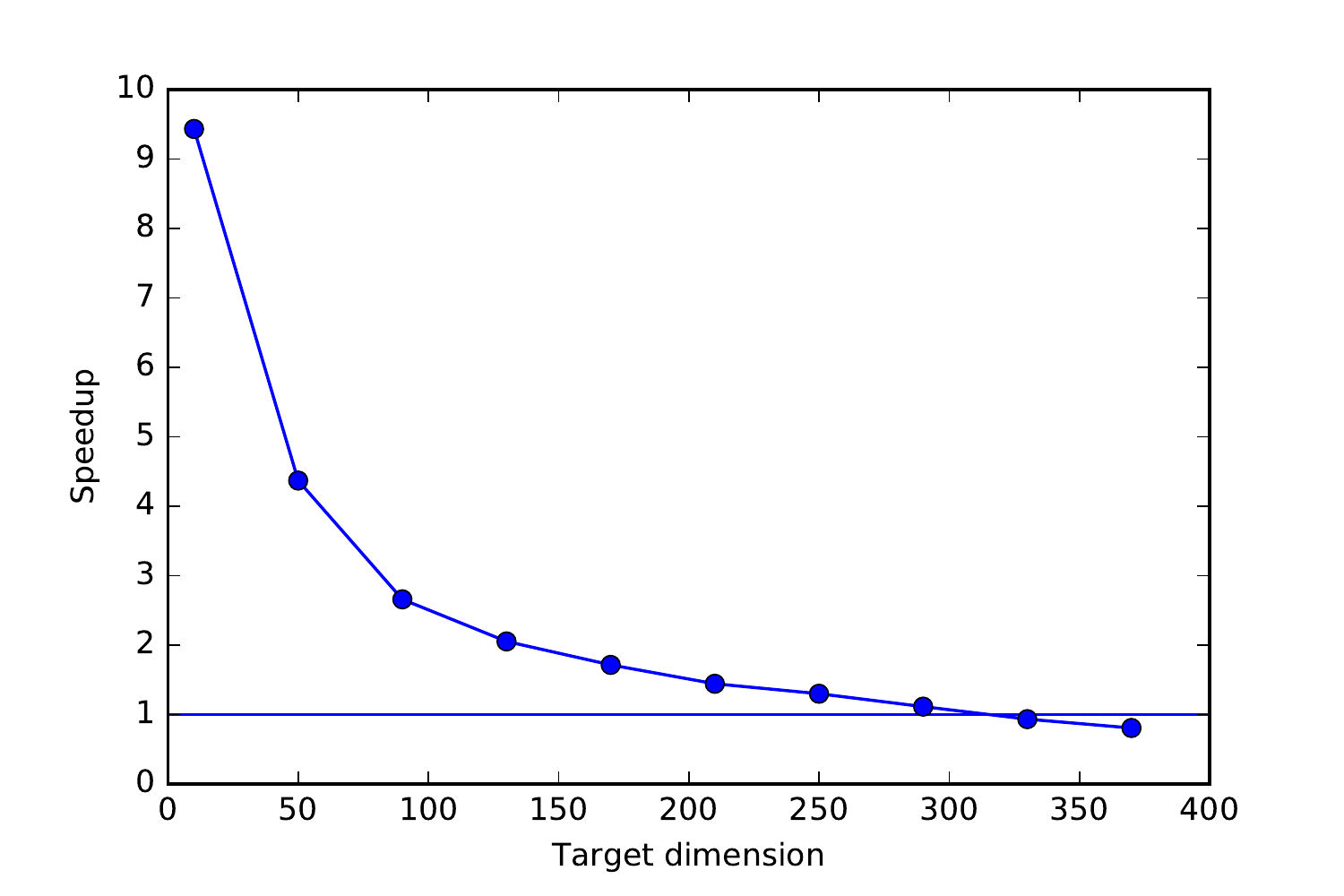}  & 
		\includegraphics[width=0.23\textwidth]{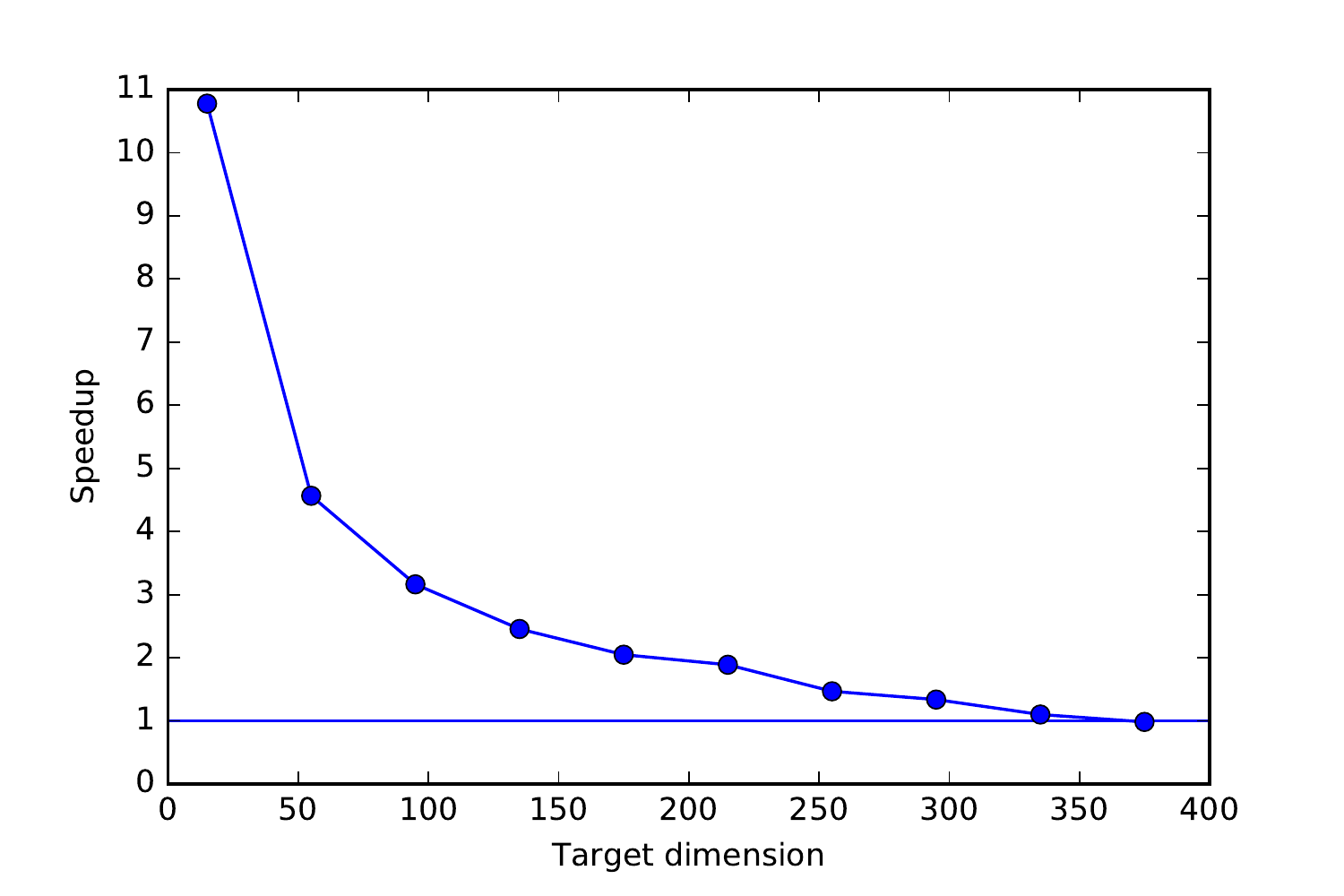} &
		\includegraphics[width=0.23\textwidth]{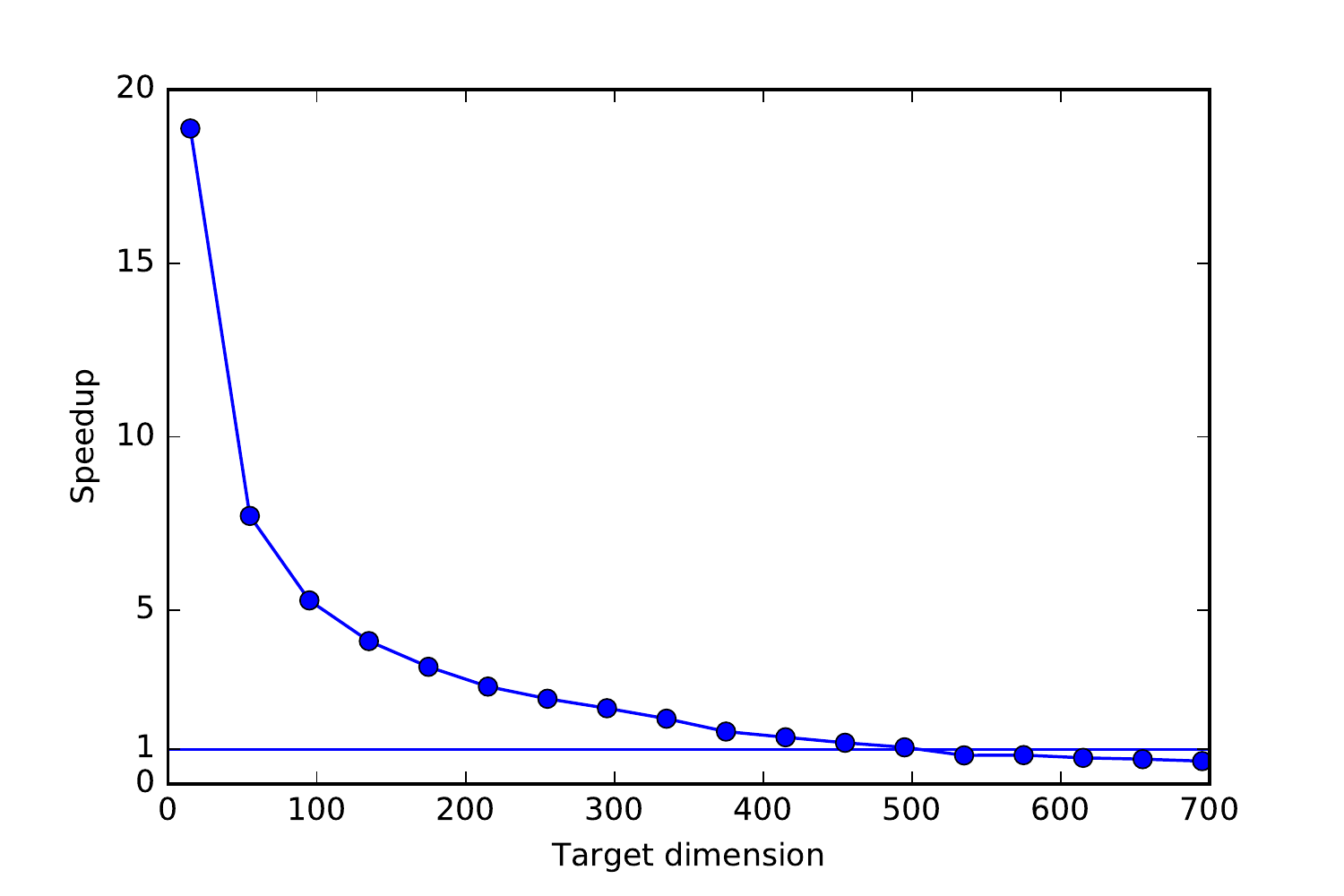} \\
		(e) & (f) & (g) & (h)\\
	\end{tabular}
	\end{center}
	\caption{
		\label{fig:rand-tradeoff}
	(I) Trade-off between target dimension and distortion on various datasets --- comparison between Adagio running PCA and randomized PCA as a subroutine --- usage of randomized PCA does not have a significant effect on quality of solution.  Datasets (a) MNIST-subset, (b) ScreenType, (c) FordB, (d) UWaveGesture. 
 (II) Corresponding speedups  for 	(e) MNIST-subset, (f) ScreenType, (g) FordB, (h) UWaveGesture. 
	 }
\end{figure*}

\subsection{Computational efficiency}
\label{subsec:efficiency}

Table~\ref{tab:efficiency} presents our findings on a randomly chosen sample of 800 points from the MNIST dataset for random projection, PCA,   NuMax, \methodname using  exact PCA as its subroutine  (Adagio), and \methodname using randomized PCA \cite{randomizedPCA} (Adagio-Randomized-PCA). For each maximum distortion level $\delta \in \{0.05, 0.1, 0.2\}$   we compute how many dimensions are needed  per  method to achieve this near-isometry quality. Note that for random projections there is no information for $\delta$ equal to 0.05 and 0.1. This is because for these low levels of distortion, the target dimension is equal to the ambient dimension 784, i.e., there is no dimensionality reduction.   

As in the previous section, we observe that NuMax is able to detect lower dimensional subspaces compared to the rest of the methods for a given level of distortion. Nonetheless, the computational cost is excessive. Even for the subset of 800 points, it takes more than 18 minutes (1\,105 seconds) to achieve distortion equal to 0.05.  When NuMax runs on the full MNIST dataset with 60\,000 digits, it takes more than 9 hours without completion. Also, note that as the distortion level decreases from 0.2 to 0.05 the run time increases rapidly. 
Our method \methodname provides a nice trade-off between  computational cost and the quality of the near-isometry. It   outperforms random projections and PCA by simply being their combination. 

\begin{table}
	\begin{center}
	\begin{tabular}{l|ccc}
		Method & Distortion & Dimension & Time \\ \hline
		
		\multirow{3}{*}{Random Projection (JL Lemma)} & 0.05 & -- & -- \\
		& 0.1 & -- & -- \\
		& 0.2 & 260 & 0.45s \\ \hline
		
		\multirow{3}{*}{PCA} & 0.05 & 337 & 5.18s \\
		& 0.1 & 280 & 5.1s \\
		& 0.2 & 205 & 4.84s \\ \hline
		
		\multirow{3}{*}{NuMax} & 0.05 & 83 & 1105s \\
		& 0.1 & 59 & 373s \\
		& 0.2 & 42 & 220s \\ \hline

		\multirow{3}{*}{Adagio-Exact-PCA} & 0.05 & 298 & 5.6s \\
		& 0.1 & 187 & 5.5s \\
		& 0.2 & 95 & 4.8s \\ \hline

		\multirow{3}{*}{Adagio-Randomized-PCA} & 0.05 & 298 & 6.64s \\
		& 0.1 & 190 & 4.01s \\
		& 0.2 & 98 & 1.82s \\  
 
	\end{tabular}
	\end{center}

	\caption{\label{tab:efficiency} Run time for dimensionality reduction on 800 sample points from MNIST dataset. For details, see Section~\ref{subsec:efficiency}.}
\end{table}

 Additionally, \methodname using randomized PCA as its black-box rather than exact PCA performs comparably well with \methodname. The astute reader may note that  for distortion $\delta=0.05$ \methodname with exact PCA versus \methodname with randomized PCA is faster. This happens because the constants hidden in the big-O notation asymptotics and the number of principal components required. Specifically, randomized PCA requires time $O(mn \log{k} + (m+n)k^2 )$ to compute the top $k$ principal components for a matrix which is $m\times n$. Exact PCA requires time equal to $O(nmk)$. It turns out that despite the better asymptotics of randomized PCA, exact PCA wins in practice  for this specific experiment.  
In general, the use of randomized PCA speeds up \methodname while maintaining the output quality.  We explore randomized PCA 
as a subroutine for \methodname in the following.

\subsection{\methodname with Randomized PCA}

Figure~\ref{fig:rand-tradeoff}(a),(b),(c),(d) plot the target dimension $r$ versus the resulting distortion for \methodname when it uses an exact and a randomized PCA subroutine to find the top principal components for   the MNIST subset,   ScreenType,   FordB, and UWaveGesture
respectively. We observe that the quality of the output remains almost unchanged when we use randomized PCA. Figure~\ref{fig:rand-tradeoff}(e),(f),(g),(h) plot the resulting speedup respectively. We observe that when the number of principal components is relatively small, the resulting speedups are important. As the target dimension increases, the speedups decrease. The  run times without randomized
PCA are  5.5, 1.2, 2.0, and 8.3 for the four datasets respectively. 
Finally, the results shown in Figure~\ref{fig:rand-tradeoff} are representative of our findings across all datasets.

\subsection{Approximate Nearest Neighbor Queries}
\label{subsec:nearestneighbors}
\begin{figure*}
	\begin{center}
		\begin{tabular}{cc}
		\includegraphics[scale=0.5]{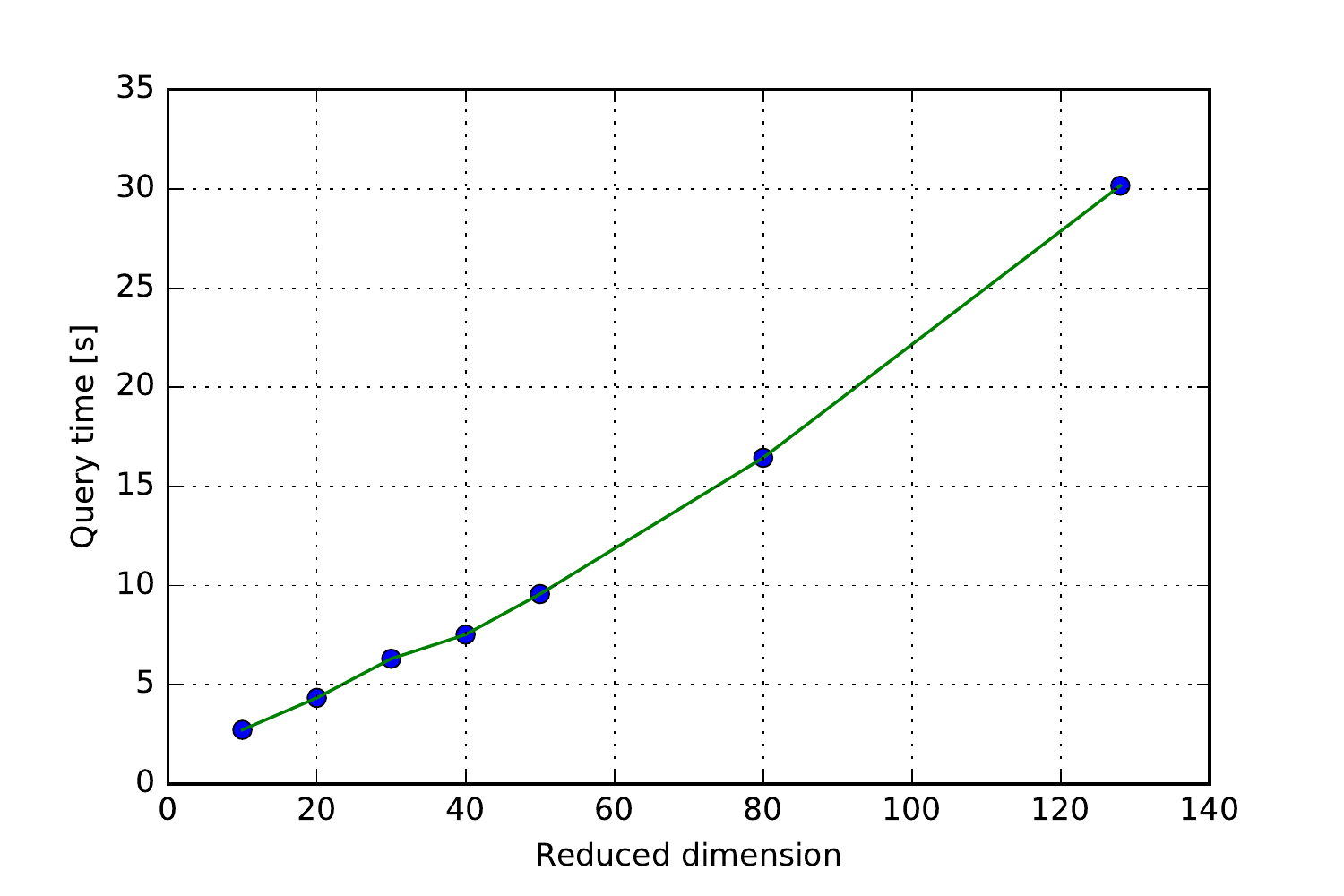} &
		\includegraphics[scale=0.5]{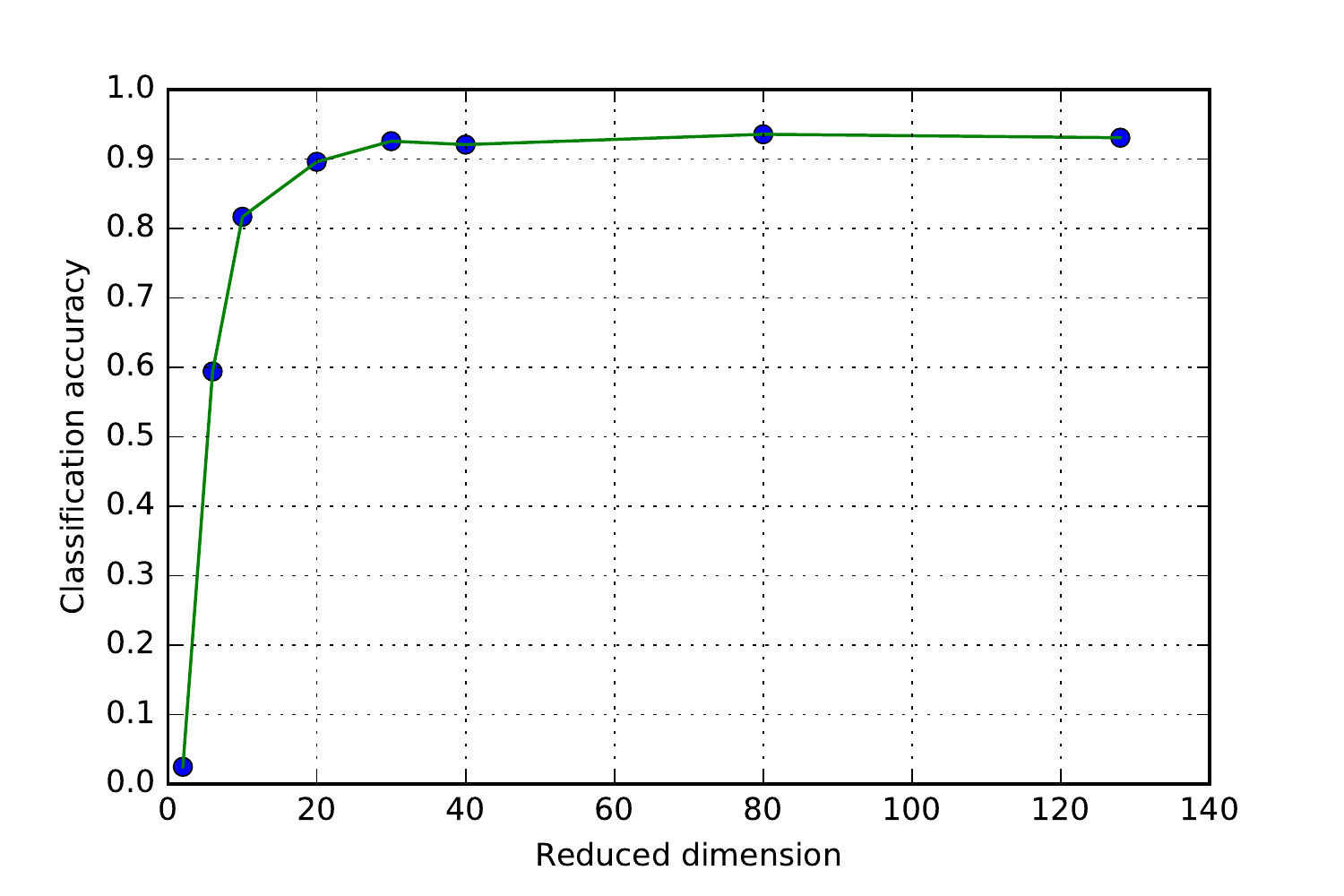} \\
		(a) & (b)
	\end{tabular}
	\caption{ 	\label{fig:query_time_reduction}
		Query time and accuracy for classification of building photos from ZuBuD database using \methodname for different target dimensions. The ambient dimension is 128. For details, please see Section~\ref{subsec:nearestneighbors}.
	}
	\end{center}
\end{figure*}

We explore the usage of \methodname as a preprocessing step for nearest neighbor queries. Such queries play a major role in numerous applications and have attracted significant research interest \cite{andoni2006near,arya1993approximate,ding2004k,ide2007computing,gionis1999similarity,indyk1998approximate,kleinberg1997two,ueno2006anytime}.

Our running application is image retrieval. We use the ZuBuD database \cite{shao03-zubud}, with pictures of 201 buildings in Z\"{u}rich; for each building 5 different photos are provided, with different viewpoints and light conditions. Thus, in total the image database consists of 1\,005 pictures.  We use the well-established SIFT features \cite{Lowe2004} that provide significant invariance properties for image retrieval applications \cite{boufounos2015representation,LiRB}. 

 We create a training and a query/test dataset as follows. For each one of 201 buildings, we pick the first picture --- all these pictures will form a query/test set. The four remaining pictures for each building are included in the training dataset. 

Using the SIFT algorithm we extract $500$ scale-invariant \emph{descriptor vectors} for every image in the training dataset.
Each  \emph{descriptor vector} is a point in $\field{R}^{128}$. 
At the end of processing the data, we have a \emph{database} $\mathcal{D}$ with $4 \text{~pictures~} \times 201 \text{~buildings~} \times 500 \text{~descriptor vectors~}$ points in $\mathbb{R}^{128}$.

In order to decide  which building a test photo depicts, we extract from this photo $100$ scale-invariant  descriptor vectors in $\mathbb{R}^{128}$ using again the SIFT algorithm. For each descriptor vector we perform a 10 nearest neighbor query to our 
\emph{database} $\mathcal{D}$. Thus, for any given query image, we create a multiset of descriptor vectors. We classify the picture as the building that appears most often in the resulting multiset. If a tie occurs, then we perform random assignment to one of the tied candidate buildings.  

We perform the same experiment for different target dimensions, and we evaluate the performance via  the accuracy defined as

$$ \text{accuracy} = \frac{ \text{\# correctly classified buildings}}{ \text{\#  of  buildings}}.$$

Using the aforementioned experimental setup, we find that without reducing the dimensionality of the descriptor vectors, we can achieve 
classification accuracy $93\%$. Spefically, without  any dimensionality reduction method $188$ out of $201$ objects are correctly classified using this simple majority rule.

We use \methodname to reduce the dimensionality of the descriptor vectors to different target dimensions.
Figure~\ref{fig:query_time_reduction}(a) shows the effect of reducing the dimension of the SIFT descriptor vectors on query times. We verify the fact that the smaller the dimension of the points, the faster we can answer a given image retrieval query.  On the other hand, Figure~\ref{fig:query_time_reduction}(b) shows that as the dimensionality decreases, so does the accuracy. Our findings indicate that the critical dimension value is around 20. For any target dimension 
greater than 20, we do not observe any significant loss in the classification accuracy.

Figure~\ref{fig:zubud-sample-images}(a) shows a sample query image, together with three closest candidates from the training dataset. It is worth noting that the building in Figure~\ref{fig:zubud-sample-images}(d) 
has significantly less votes compared to  buildings (b) and (c).
When we apply \methodname with target dimension greater than  20, 
Figure~\ref{fig:zubud-sample-images}(a) is correctly classified 
 to represent the same building as in Figure~\ref{fig:zubud-sample-images}(b). When the target dimension becomes less than 20, it is confused with Figure~\ref{fig:zubud-sample-images}(c).

\begin{figure*}[t]
	\begin{center}
		\begin{tabular}{cccc}
			\includegraphics[scale=0.15]{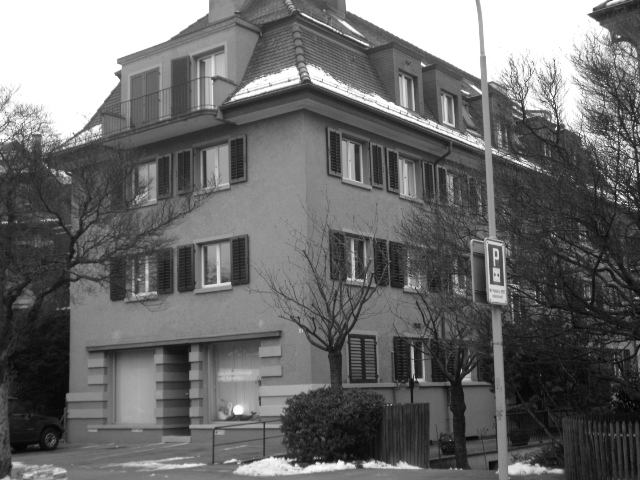} \hspace{6mm} &
			\includegraphics[scale=0.15]{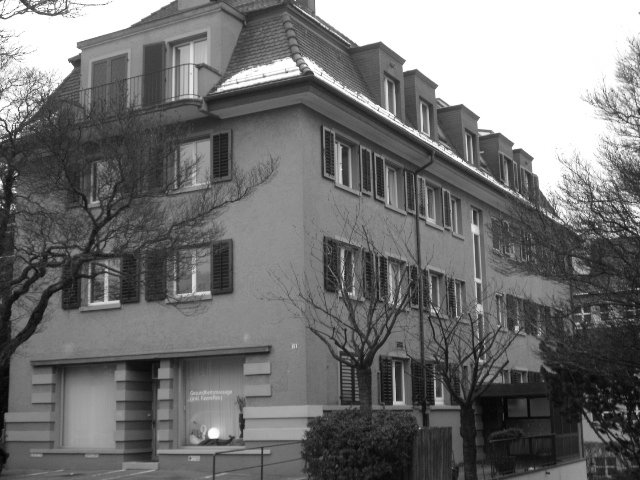} \hspace{6mm}  & 	
			\includegraphics[scale=0.15]{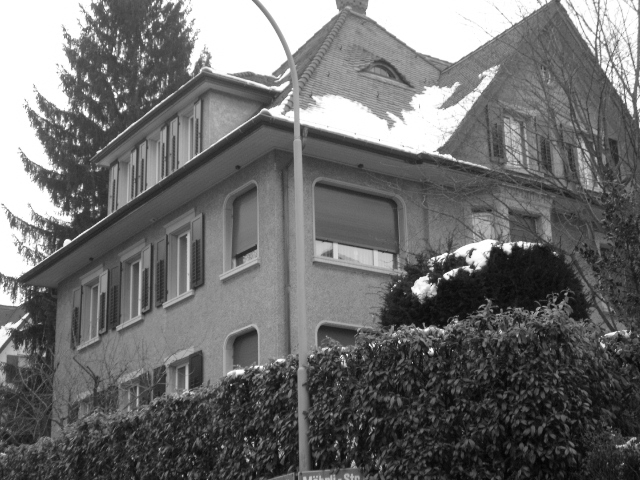} \hspace{6mm} &
			\includegraphics[scale=0.15]{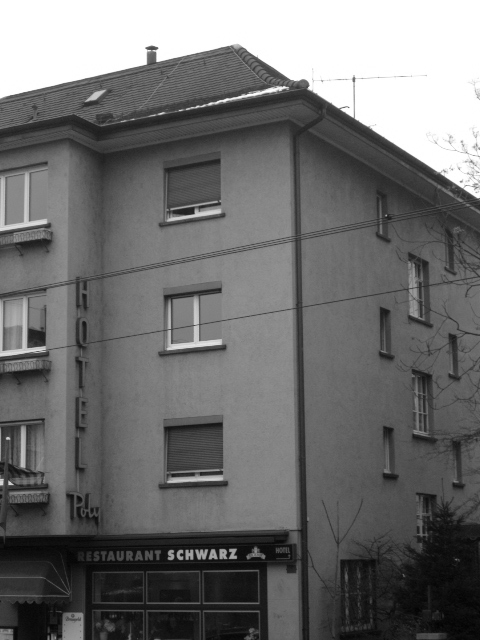} \hspace{6mm} \\
			(a) & (b) 		&			(c) & (d)\\
		\end{tabular}
		\caption{Sample images from ZuBuD database. (a)  Query image, (b), (c), (d)  nearby buildings in the training dataset. When the target dimension of the dimensionality reduction is $>20$, (a) is correctly classified as the same object as (b). After reducing dimension to $ r \leq 20$, it becomes incorrectly classified as (c) \label{fig:zubud-sample-images}.}
	\end{center}
\end{figure*}

\subsection{Classification} 
\label{subsec:classification}
We conclude our applications by studying the effect of dimensionality reduction on classification accuracy.  For  timing aspects of nearest-neighbor queries, our results are consistent with the detailed results of the approximate nearest neighbor application.  We perform binary classification  experiments as follows.  For a given training dataset of vectors indexed by the set $S \subseteq [n]$, let $c_i \in \{0,1\}$ be the label of $i \in S$.  We apply the semi-supervised learning algorithm due to Zhu, Ghahramani, and Lafferty  \cite{zhu2003semi} for learning the labels of vertices in the graphs. Specifically, we solve for the vector $x$ that minimizes 

\begin{equation} 
\label{lapl} 
x^T L x = \sum_{(u,v) \in E(G)} w_{uv} (x_u-x_v)^2,
\end{equation}

\noindent subject to $x_i=c_i$ for all $i  \in S$. This can be encoded as a symmetric diagonally dominant linear system which can be solved in theory more efficiently even than sorting \cite{cohen2014solving}. However in our experiments, we use usual matrix inversion. For each node $j \notin S$ we decide  $c_j=1$ if $x_j \geq \frac{1}{2}$, and otherwise we set $c_j=0$.  We perform 10-fold cross-validation and we report the mean classification error of the classifier, defined as  
$ \text{ {\sc err} } = \frac{ \text{fp+fn} }{\text{tp+tn+fp+fn}}$, where $\text{tp,tn,fp,fn}$ are the number of true positives, true negatives, false positives, and false negatives respectively. When we have more than two classes, we consider each class separately and we group the points from the rest of the classes as a single class. We then solve \eqref{lapl} for each class $c$ to obtain a vector $x^c$. We decide that an unlabeled point $i$ belongs to the class $c$ 
that maximizes $x^c_i$. We evaluate the classification output with  
the mean classification error, namely the fraction of misclassified points. 

Table~\ref{tab:classification} shows the results we obtain using \methodname and NuMax respectively. For each dataset we record the time $T_1$ required to reduce the dimensionality, the time $T_2$ required to construct the $k$-nearest-neighbor graph for $k=30$ for each dataset, and the resulting classification accuracy. We also run Zhu et al. classification algorithm on the original dataset to understand the effect of dimensionality reduction on classification accuracy.   The resulting accuracies are 80.89\%, 59.79\%, 56.67\%, 62.93\%, 74.15\%, and 63.59\% for Glass, Ionosphere, Iris, Pima Diabetes, Vehicle, and Wine datasets respectively. By comparing these accuracies with the accuracies obtained by using NuMax and \methodname as preprocessing steps, we observe that typically the accuracy does not change significantly, and in some cases it may even increase. Also, we observe again that $T_1$ is larger for NuMax compared to our method.

\begin{table}[t]
\centering \small
\caption{\label{tab:classification} NuMax and \methodname performance for the  classification task using $k$-NN graphs, $k=30$. Run-times $T_1,T_2$ are in seconds. For details, please see Section~\ref{subsec:classification} for details.}  
\begin{tabular}{r|ccc|ccc|}
\multicolumn{1}{c}{} &  \multicolumn{3}{c}{NuMax} &  
\multicolumn{3}{c}{\methodname} \\
\cline{2-7}
&
\multicolumn{1}{c}{$T_{1}$} &
\multicolumn{1}{c}{$T_{2}$} &
\multicolumn{1}{c|}{acc.\%} &
\multicolumn{1}{c}{$T_1$} &
\multicolumn{1}{c}{$T_2$} &
\multicolumn{1}{c|}{acc.\%}  \\ \cline{2-7}
\textsf{Glass}   &   32.04  & 0.1    & 78.2      &  0.004  &   0.006  & 79.9     \\
\textsf{Ion.}   &    24.0 & 0.35     &  33.5    & 0.01    & 0.02    & 64.13      \\
\textsf{Iris}   & 32.7     &  0.029   & 56.4      &   0.004 &   0.006  & 65.6    \\
\textsf{Pima}   & 4\,247.8    &  4.72     & 61.43       & 0.01    & 0.05    &  61.76     \\
\textsf{Veh.}    &   101.9  & 8.80     &  74.12    &   0.007 &   0.84 &  74.22  \\
\textsf{Wine}   &  38.7   & 0.04    &  51.2     &  0.007  & 0.006    & 55.66      \\
\cline{2-7}
\end{tabular}
\end{table}

\section{Conclusion}
\label{sec:concl}
\spara{Summary.}  In this paper we propose \methodname, a novel data-aware near-isometric linear dimensionality method. Our method is scalable, amenable to distributed implementation, and comes with strong theoretical guarantees. It leverages the underlying dataset geometry  while  maintaining well {\em all} pairwise distances between points. Our experimental results show that \methodname scales significantly better than NuMax, a state-of-the-art method \cite{hedge2015}.  Furthermore, it can be used as a preprocessing step in nearest-neighbor related applications, such as classification using $k$-nearest neighbor graphs and approximate nearest neighbor queries, in order to improve their computational efficiency while maintaining the output quality.

\spara{Open Problems.}  An interesting general direction is to design better data-aware near-isometric methods. We conjecture that the following problem is NP-hard. Specifically, given a dataset $T \subset \field{R}^d$, target dimension $r$ and target distortion $\delta$, does there exist a matrix $\Pi\in\field{R}^{r\times d}$ with distortion at most $\delta$ with respect to $T$?

\section*{Acknowledgment}
 
We would like to thank Chinmay Hedge for sharing the original NuMax code.

\bibliographystyle{abbrv}
\bibliography{ref}

\end{document}